\documentclass{article} 
\usepackage{iclr2026_conference,times}

\usepackage{amsmath,amsfonts,bm}

\def\eqref#1{equation~\ref{#1}}

\def\1{\bm{1}}

\DeclareMathAlphabet{\mathsfit}{\encodingdefault}{\sfdefault}{m}{sl}
\SetMathAlphabet{\mathsfit}{bold}{\encodingdefault}{\sfdefault}{bx}{n}

\usepackage{amsmath}
\usepackage{afterpage}
\usepackage{amsthm}
\usepackage{enumitem}
\newtheorem{theorem}{Theorem}
\usepackage{hyperref}
\usepackage{algorithm}
\usepackage[noend]{algpseudocode}
\usepackage[nameinlink,capitalise]{cleveref}
\usepackage[T1]{fontenc}
\usepackage[dvipsnames,table]{xcolor}
\usepackage{hyperref}
\usepackage{url}
\usepackage{booktabs}
\usepackage{graphicx}
\usepackage{subcaption}
\usepackage{booktabs}  
\usepackage{multirow}  
\usepackage{siunitx}   
\usepackage{makecell}
\usepackage{pifont}
\newcommand{\cmark}{\ding{51}}
\newcommand{\xmark}{\ding{55}}
\usepackage[normalem]{ulem}
\usepackage[most]{tcolorbox}
\usepackage{listings}
\usepackage[scaled=0.92]{inconsolata}

\newlength{\cafeArxivHeadheightDelta}
\fancypagestyle{cafeArxivFirst}{%
  \fancyhead{}%
  \fancyhead[L]{\includegraphics[height=20pt]{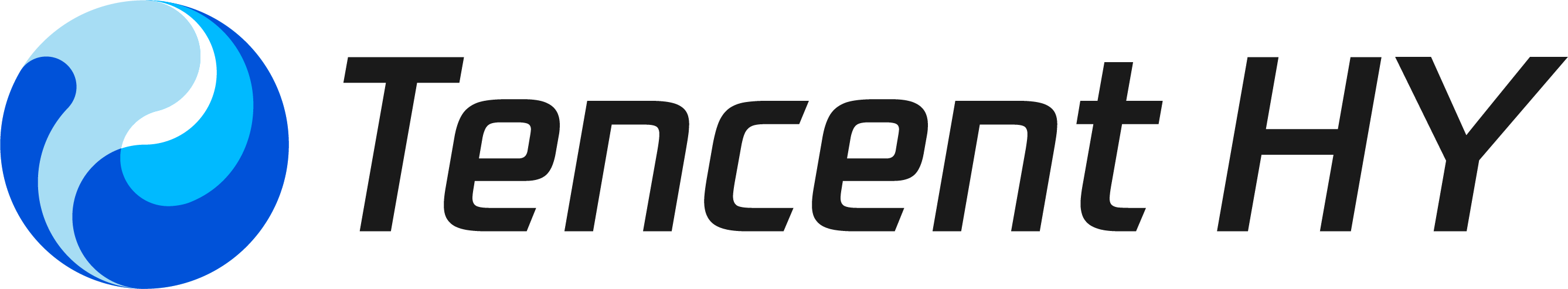}}%
  \fancyhead[R]{\includegraphics[height=24pt]{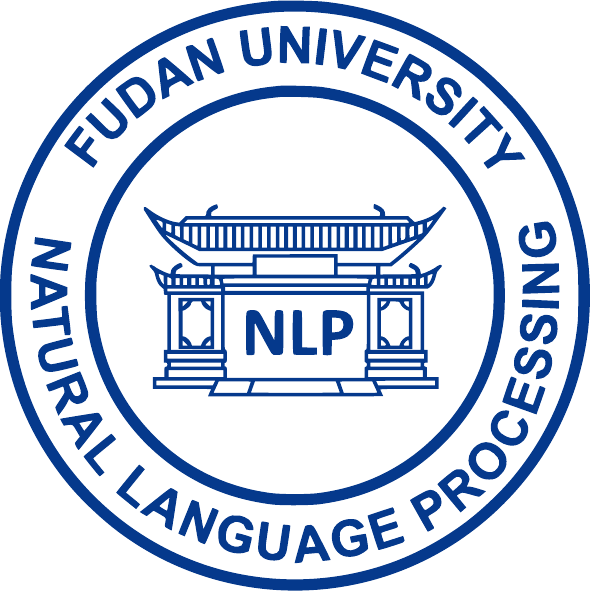}}%
}
\fancypagestyle{cafeArxivBody}{%
  \fancyhead{}%
}

\lstdefinestyle{cafePrompt}{
  basicstyle=\footnotesize\ttfamily,
  breaklines=true,
  columns=fullflexible,
  keepspaces=true,
  showspaces=false,
  breakindent=0pt,
  showstringspaces=false,
  aboveskip=0pt,
  belowskip=0pt
}
\lstdefinestyle{cafeCasePrompt}{
  style=cafePrompt,
  basicstyle=\fontsize{8}{8.8}\selectfont\ttfamily
}
\definecolor{Periwinkle}{rgb}{0.8, 0.8, 1.0}
\definecolor{CAFEBlue}{HTML}{5E81AC}
\definecolor{CAFEAmber}{HTML}{D9A23A}
\definecolor{CAFERed}{HTML}{C4554D}
\definecolor{CAFEGreen}{HTML}{5F9C72}
\definecolor{CAFEInk}{HTML}{344054}
\definecolor{CAFEGray}{HTML}{667085}

\tcbset{
  cafeCaseOuter/.style={
    enhanced,
    width=0.98\linewidth,
    colback=white,
    colframe=CAFEBlue!70!black,
    boxrule=0.7pt,
    arc=1.2mm,
    left=1.8mm,
    right=1.8mm,
    top=1.6mm,
    bottom=1.6mm,
    colbacktitle=CAFEBlue!8,
    coltitle=CAFEInk,
    fonttitle=\small\bfseries,
    toptitle=1.2mm,
    bottomtitle=1.2mm
  },
  cafeQueryBox/.style={
    enhanced,
    colback=CAFEBlue!4,
    colframe=CAFEBlue!55!black,
    boxrule=0.55pt,
    arc=0.8mm,
    left=1.4mm,
    right=1.4mm,
    top=1.1mm,
    bottom=1.1mm
  },
  cafeStageBox/.style={
    enhanced,
    height=10mm,
    valign=center,
    halign=center,
    boxrule=0.55pt,
    arc=0.8mm,
    left=0.6mm,
    right=0.6mm,
    top=0.5mm,
    bottom=0.5mm
  },
  cafeContentBox/.style={
    enhanced,
    boxrule=0.55pt,
    arc=0.8mm,
    left=1.4mm,
    right=1.4mm,
    top=1.1mm,
    bottom=1.1mm,
    coltitle=CAFEInk,
    fonttitle=\footnotesize\bfseries,
    toptitle=0.8mm,
    bottomtitle=0.8mm
  },
  cafeLaneBox/.style={
    enhanced,
    colback=white,
    boxrule=0.65pt,
    arc=1mm,
    left=1.3mm,
    right=1.3mm,
    top=1.2mm,
    bottom=1.2mm,
    coltitle=CAFEInk,
    fonttitle=\footnotesize\bfseries,
    toptitle=1mm,
    bottomtitle=1mm
  }
}

\newcommand{\cafetag}[2]{%
  \fcolorbox{#1!70!black}{#1!7}{\strut\scriptsize\bfseries #2}%
}
\title{CAFE: Self-Improving Search Agents Need \\Co-Evolving Feedback }

\author{\textbf{Boyang Liu}{\footnotesize $^{1,2}$}\thanks{Equal Contribution   \ \ $^{\dagger}$ Corresponding Author $^{\ddagger}$ Project Leader} 
 \;
 \textbf{Senjie Jin}{\footnotesize $^{1,2,*, \ddagger}$}
 \;
 \textbf{Peixin Wang}{\footnotesize $^{1,*}$}
 \;
 \textbf{Zhangyue Yin}{\footnotesize $^{2}$}
 \;
 \textbf{Yibo Wang}{\footnotesize $^{2}$}
 \;
 \\
 \textbf{Yuhao Zhou}{\footnotesize $^{1,2}$}
 \;
 \textbf{Zhihao Zhang}{\footnotesize $^{1}$}
 \;
 \textbf{Xinbing Liang}{\footnotesize $^{2}$}
 \;
  \textbf{Shizheng Zhu}{\footnotesize $^{1}$}
 \;
 \textbf{Yuhui Wang}{\footnotesize $^{1}$}
 \;
 \\
 \textbf{Jingqi Tong}{\footnotesize $^{1}$}
 \;
 \textbf{Dingwei Zhu}{\footnotesize $^{1}$}
 \;
 \textbf{Zhiheng Xi}{\footnotesize $^{1}$}
 \;
 \textbf{Jiazheng Zhang}{\footnotesize $^{1}$}
 \;
 \textbf{Clive Bai}{\footnotesize $^{2}$}
 \;
 \textbf{Clarenceai}{\footnotesize $^{2}$}
 \;
 \\
 \textbf{Blaze Chen}{\footnotesize $^{2}$}
 \;
 \textbf{Tao Gui}{\footnotesize $^{1,\dagger}$}
 \;
 \textbf{Qi Zhang}{\footnotesize $^{1}$}
 \;
  \textbf{Xuanjing Huang}{\footnotesize $^1$} \\[2pt]
{\footnotesize $^1$}Fudan University \;
  {\footnotesize $^2$}LLM Department, Tencent \; \\
  \texttt{\{boyangliu25,sjjin24\}@m.fudan.edu.cn \ tgui@fudan.edu.cn}\\
  \texttt{\{clivebai,blazeechen\}@tencent.com}\\
}

\iclrfinalcopy
\begin{document}

\maketitle
\thispagestyle{cafeArxivFirst}
\pagestyle{cafeArxivBody}

\begin{abstract}
Reliable search requires more than acquiring external evidence. An agent must
also recognize and recover from errors as its trajectory unfolds. In-trajectory
feedback provides a mechanism for such recovery by diagnosing where the search
has drifted and redirecting subsequent reasoning steps. This is
particularly important in long-horizon search, where an early directional error
may receive no immediate corrective signal and can compound across later steps.
Making such feedback learnable, however, creates a coupled problem: the agent
must learn when to request and use feedback, while the critic must learn
corrections from outcome-confounded rollouts as the agent's failure patterns
evolve. We introduce CAFE (Coupled Agent--Feedback Evolution), a framework in
which a shared-parameter model alternates between search-agent and critic roles.
CAFE initializes
feedback-conditioned recovery from trajectories built around the base agent's
own failures, then couples online and offline optimization. During online RL, a
comparative feedback estimate uses a prompt-level call--skip success gap to
shape request returns, while feedback-aware advantage shaping reweights token
advantages before and after feedback. Offline, rollout-derived preference optimization
learns feedback from matched successful and unsuccessful trajectories. On
seven agentic search benchmarks, CAFE outperforms the evaluated RL-based search
agents on average, retains its gains across all six out-of-domain benchmarks,
and reduces answer-level hallucinations. One-sided ablations show that
improving only the agent or only the critic eventually plateaus, whereas
alternating the two updates continues to improve performance. These findings
suggest that a self-improving search agent needs feedback that co-evolves with
the policy it guides.
\end{abstract}

\section{Introduction}
Search agents answer knowledge-intensive questions by iteratively interacting with external search environments, issuing queries and revising their behavior based on retrieved evidence \citep{xi2025survey, DBLP:conf/emnlp/LiDJZZZZD25, DBLP:conf/emnlp/ZhengFHCYLL25}. Many earlier retrieval-augmented pipelines relied on fixed strategies that determined when retrieval should occur, either at predetermined reasoning stages \citep{trivedi2023interleaving} or when a hand-crafted condition was triggered \citep{jiang2023active}. In contrast, outcome-supervised search agents learn policies for when and how to search from terminal task rewards \citep{jin2025searchr1trainingllmsreason, song2025r1searcherincentivizingsearchcapability}. 
Yet this autonomy remains largely outward-facing, with agents learning what external knowledge to seek while lacking comparable introspection into their own search trajectories.

In long-horizon search, an early directional error may receive no immediate corrective signal. Its cost is not confined to the errant step but propagates throughout the subsequent trajectory \citep{wang2026long, an2026erase, qi2026trajdebug}.
Nor is the source of failure easy to identify retrospectively. 
For instance, the agent may apply a constraint to the initial candidates but silently drop it in later steps, leave a required subgoal unexplored, or repeatedly rephrase a query without acquiring new evidence \citep{wong2026widesearch, wandr2026}.
In such scenarios, the trajectory begins correctly but ends in failure, and the terminal reward cannot say where. Recent work has therefore introduced finer-grained credit through information gain \citep{IGPO}, confidence change \citep{TIPS}, and local state comparison \citep{StepSearch, GiGPO}. These signals remain evaluative rather than instructive \citep{sutton1998reinforcement}, assigning credit retrospectively while deferring correction to future policy updates.
Redirecting the active trajectory instead requires in-context feedback that diagnoses where the search has drifted and prescribes what to try next.


Although prior work has explored in-context feedback, methods such as Reflexion \citep{reflexion}, Self-Refine \citep{Self-Refine}, and CRITIC \citep{CRITIC} typically return natural-language critiques post hoc and via prompted rather than trained models. Making such feedback learnable and delivering it within the active trajectory introduces three intertwined challenges. The first lies with the agent, which must learn when to request an intervention, while its learning objective must determine which behavior the resulting success should reinforce, 
given that a rescued trajectory and a flawless one yield identical rewards \citep{uesato2022solving}. 
The second lies with the critic, whose feedback must be learned without ground truth, from a reward confounded by both the preceding search and the agent’s subsequent actions.
The third emerges between them, as online updates shift the states the agent visits and the failures it encounters \citep{ackermann2025off}, potentially leaving a static critic misaligned with the evolving policy it is meant to guide.

To address these challenges, we introduce \textbf{CAFE} (\textbf{C}oupled \textbf{A}gent–\textbf{F}eedback \textbf{E}volution), an iterative, shared-parameter framework that integrates corrective feedback into the active search trajectory and alternates optimization of the agent and critic capabilities.
Learning this coupled interaction directly from sparse outcomes is difficult, so we initialize CAFE with recovery demonstrations 
constructed from the base agent’s own failures.
Rather than replacing a failed rollout with an ideal trajectory, we preserve its erroneous prefix, 
insert corrective feedback, and retain a successful continuation.
This teaches the model to request, generate, and use feedback at states its own policy actually visits.
Imitation alone, however, does not reveal when an intervention is useful or how the resulting success should be credited.
During online RL, the comparative feedback estimate (CFE) uses the empirical prompt-level success gap between rollouts with and without feedback requests to shape request returns, while feedback-aware advantage shaping redistributes credit across the intervention.
Offline, rollout-derived preference optimization (RDPO) learns from prefix-matched successful and failed trajectories, reducing outcome confounding. 
At each iteration, CAFE alternates online agent learning with offline critic refinement using the latest rollouts, maintaining alignment between the two capabilities as they co-evolve.

We evaluate CAFE using Qwen2.5-7/3B-Instruct \citep{qwen2025qwen25technicalreport} across seven agentic SearchQA benchmarks, with additional evaluation on BrowseComp-Plus \citep{chen2025browsecompplusfairtransparentevaluation}. 
At the 7B scale, CAFE achieves the strongest average performance among competing RL-based search methods, outperforming the strongest baseline by 2.1 EM and 1.3 F1. Notably, these gains are consistent across all six out-of-domain benchmarks and generalize robustly to the 3B scale.
Extensive component and objective ablations confirm the necessity of our feedback-aware credit assignment and the alternating optimization schedule. Furthermore, in-depth analyses of agent–critic cross-play, training dynamics, and hallucination (reducing the average answer-level rate from 17.6\% to 12.6\%) explicitly demonstrate how sustained co-evolution actively drives the observed performance improvements.

\noindent\textbf{Our contributions are fourfold:}
\begin{enumerate}[
    labelindent=0pt,
    leftmargin=*,
    labelsep=0.6em,
    itemsep=0.3em,
    topsep=0.3em
]
    \item[\ding{182}] \textbf{We formulate self-improving search as a coupled agent–feedback learning problem.} CAFE integrates both roles within a shared-parameter model, alternating online policy updates with offline critic refinement.
    \item[\ding{183}] \textbf{We develop a targeted RL framework for seeking and utilizing feedback.} Our comparative feedback estimate (CFE) and advantage shaping actively learn when to trigger interventions and how to assign credit.
    \item[\ding{184}] \textbf{We introduce rollout-derived preference optimization (RDPO) for feedback generation.} By learning from prefix-matched preference pairs mined from the latest online rollouts, RDPO keeps the critic aligned with the agent as it evolves.
    \item[\ding{185}] \textbf{We provide empirical evidence for co-evolution.} CAFE achieves the best average performance among the evaluated RL-based baselines, transfers robustly to all out-of-domain datasets, reduces hallucinations, and \textbf{drives sustained gains in both trajectory quality and search performance}.
\end{enumerate}




\section{Methodology}
\label{sec:method}
\textbf{Motivation.} 
While recent work has primarily advanced search agents’ ability to acquire external evidence, we argue that robust long-horizon search also requires active in-trajectory error diagnosis and correction, and as policy updates shift the agent’s state and failure distributions, the critic providing this guidance must adapt accordingly. Motivated by this coupling, we propose \textbf{CAFE}, an iterative, shared-parameter framework that integrates corrective feedback into active trajectories and alternates agent and critic optimization. 

We develop CAFE around three questions: \textbf{RQ1.} (\Cref{sec:shared_feedback_search}) How should in-trajectory feedback be structured and how can the agent learn to generate it?   \textbf{RQ2.} (\Cref{sec:online_agent_optimization}) How can an agent learn when to request feedback and how to utilize it? \textbf{RQ3.} (\Cref{sec:offline_feedback_training}) How can a shared model co-evolve its agent and critic capabilities?


\begin{figure}[t]
    \centering
    
    \includegraphics[width=\linewidth]{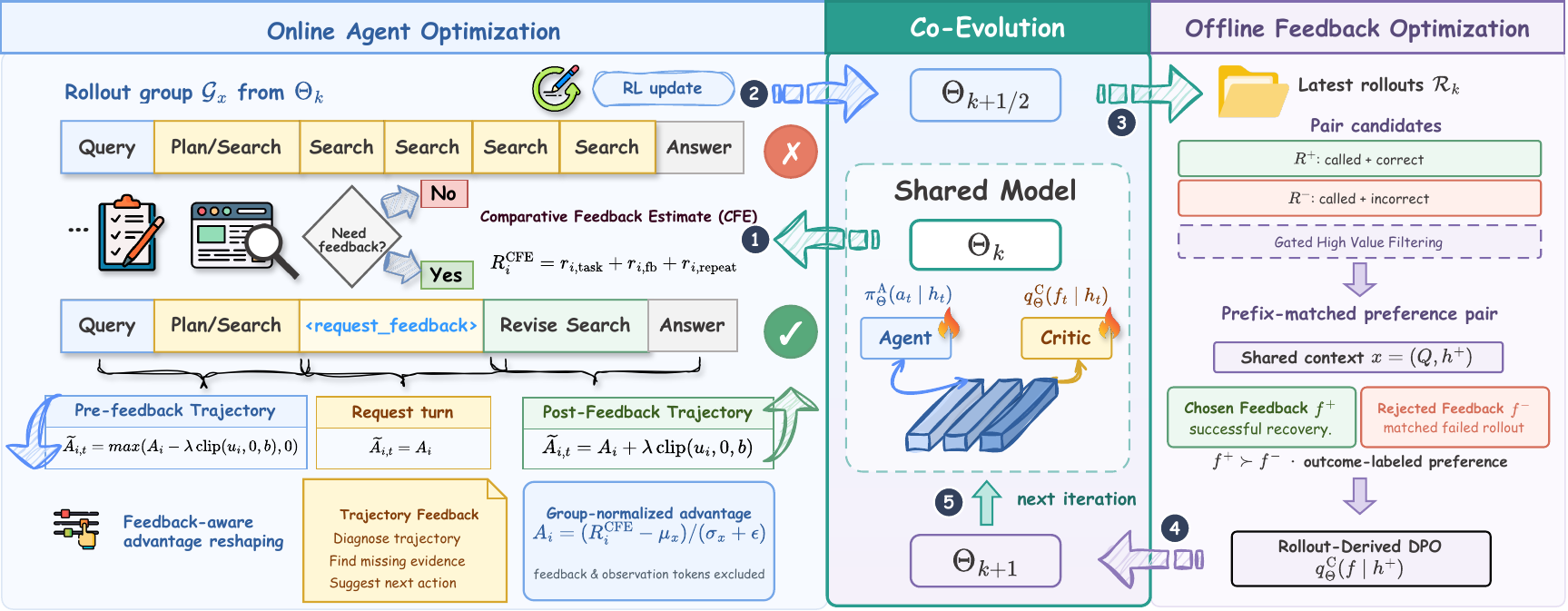}
    \caption{
    Overview of CAFE. One shared model serves as agent and critic. CFE shapes request returns from the call--skip gap; advantage shaping weights tokens before and after feedback. RDPO updates the critic from prefix-matched pairs mined from recent rollouts.
    }
    \label{fig:method_overview}
\end{figure}
\subsection{Self-Feedback Search Agent}
\label{sec:shared_feedback_search}
\noindent\textbf{RQ1.}\ \emph{How should in-trajectory feedback be structured and how can the agent learn to generate it?}

Search agents can make locally plausible choices that nevertheless steer a trajectory away from the evidence and subgoals needed for success. Such early mistakes can compound into repetitive or unproductive actions that become increasingly difficult to reverse \citep{T_3,reseek,wang2026reasoningfailsplanplanningcentric}.

\paragraph{In-Trajectory Feedback Interaction.} We therefore augment the agent's action space with an optional feedback-request action, as illustrated in \Cref{fig:method_overview}. Given a task prompt \(x_i\), the agent interacts with the search environment to produce a rollout \(\tau_i\) and receives a binary outcome reward \(r_i\). At any intermediate history \(h_{i,t}\), the agent may emit a \texttt{<request\_feedback>} action. A critic then conditions on the current trajectory and generates feedback \(f_{i,t}\) that identifies a corrective next step. We append the feedback to the context and return control to the agent, which continues the rollout from this augmented context.


\paragraph{Role-Conditioned Agent–Critic Model.} In-trajectory recovery requires a critic that can diagnose the errors made by the current agent,
while maintaining a separate critic incurs substantial overhead.
Self-rewarding methods motivate the use of a model's own judgments as a learning signal \citep{Self-Rewarding,ssr-zero}. 
Inspired by this, we use the same model to provide corrective feedback during search. We instantiate the agent and critic as role-conditioned behaviors of a single shared model:
\begin{equation}
a_{i,t}\sim\pi_{\theta}^{\mathrm{A}}(\cdot\mid h_{i,t}),
\qquad
f_{i,t}\sim q_{\theta}^{\mathrm{C}}(\cdot\mid h_{i,t}),
\label{eq}
\end{equation}
where \(\mathrm{A}\) and \(\mathrm{C}\) denote the agent and critic roles, respectively. When the agent requests feedback, the model adopts the critic role to generate \(f_{i,t}\), then returns to the agent role and continues from the augmented history. The two roles remain distinct at inference time but share a common backbone.

\paragraph{Bootstrapping Feedback-Conditioned Search.}
To bootstrap both feedback use and feedback generation, we construct SFT data from the base agent's own failure trajectories. We first collect failed rollouts and ask a teacher model (Kimi-K2.5~\citep{kimiteam2026kimik25visualagentic} in our implementation) to identify the earliest turn at which the trajectory becomes erroneous or ceases to make progress. We preserve the agent-generated prefix through this turn, insert a \texttt{<request\_feedback>} action, and use the teacher to generate corrective feedback together with a feedback-conditioned continuation. We retain only repaired trajectories that reach the correct final answer. Unlike pure teacher-generated demonstrations, these trajectories retain the failure patterns encountered by the base agent while providing a successful recovery from them. They therefore serve as high-quality SFT data for initializing the shared model with feedback-augmented search behavior.

\subsection{Online agent optimization for Feedback Seeking and Recovery}
\label{sec:online_agent_optimization}
\noindent\textbf{RQ2.}\ \emph{How can an agent learn when to request feedback and how to utilize it?}

Long-horizon search requires credit signals that are finer grained than a terminal outcome: 
with feedback as an optional intervention, the outcome further reveals neither whether a request was beneficial nor how credit should be assigned around it 
~\citep{IGPO,GiGPO,StepSearch,Information-Self-Locking}. We therefore introduce feedback-aware credit assignment at two levels. Across rollouts, we use the prompt-level empirical call--skip success gap to shape the returns of feedback-requesting trajectories. Within a rollout, we redistribute credit between the behavior that preceded a request and the recovery that followed it.

\paragraph{Comparative Feedback Estimate (CFE) Reward.}
CFE compares rollouts that request feedback with those that skip it. Following GRPO~\citep{shao2024deepseekmathpushinglimitsmathematical}, we sample a rollout group \(\mathcal{G}_x\) for prompt \(x\). Let \(n_{\mathrm{fb},i}\) be the number of requests in rollout \(i\) and \(C_i=\mathbf{1}[n_{\mathrm{fb},i}>0]\). Rollouts with \(C_i=1\) form the call group \(\mathcal{G}_{x,\mathrm{call}}\), while those with \(C_i=0\) form the skip group \(\mathcal{G}_{x,\mathrm{skip}}\). If both groups are nonempty, their empirical success gap is
\begin{equation}
    \widehat{u}(x)
    =
    \frac{1}{|\mathcal{G}_{x,\mathrm{call}}|}
    \sum_{j\in\mathcal{G}_{x,\mathrm{call}}} r_j
    -
    \frac{1}{|\mathcal{G}_{x,\mathrm{skip}}|}
    \sum_{k\in\mathcal{G}_{x,\mathrm{skip}}} r_k .
    \label{eq:observed_call_gap}
\end{equation}
For rollout \(i\), we assign the prompt-level statistic \(u_i=\widehat{u}(x_i)\), and all rollouts for the same prompt receive the same estimate. If either route is absent, we use the batch-level fallback in \Cref{app:cfe_fallback}. We combine this estimate with the task outcome through three terms. The task term \(r_{i,\mathrm{task}}=r_i\) retains the original answer-correctness reward. The feedback term \(r_{i,\mathrm{fb}}=\beta C_i u_i\) applies the group-level gap to rollouts that request feedback. The repeat term \(r_{i,\mathrm{repeat}}=-\gamma[n_{\mathrm{fb},i}-1]_{+}\) penalizes only requests beyond the first, where \([z]_{+}=\max(z,0)\). The scales \(\beta,\gamma\geq0\) control the latter two terms. Their sum gives the CFE-shaped reward:
\begin{equation}
    R_i^{\mathrm{CFE}}
    = r_{i,\mathrm{task}} + r_{i,\mathrm{fb}} + r_{i,\mathrm{repeat}}.
    \label{eq:cfe_reward}
\end{equation}

\paragraph{Feedback-Aware Advantage Shaping.}
Let \(\mu_x\) and \(\sigma_x\) denote the mean and standard deviation of the CFE-shaped returns within \(\mathcal{G}_x\). GRPO assigns every trainable agent token in rollout \(i\) the same normalized advantage:
\begin{equation}
    A_i
    =
    \frac{R_i^{\mathrm{CFE}}-\mu_{x_i}}
    {\sigma_{x_i}+\epsilon},
    \qquad \epsilon>0.
    \label{eq:base_grpo_advantage}
\end{equation}
A single \(A_i\) rewards the prefix that drove the search off course as much as the continuation that repaired it. This pairing is inherited from initialization: because the SFT trajectories in \Cref{sec:shared_feedback_search} retain the base agent's own prefix up to its earliest erroneous turn, a request tends to follow behavior that has already gone wrong. The two parts therefore play opposite roles, and reinforcing them together rewards the very behavior the agent had to abandon.

We accordingly split the trainable agent tokens of each requesting rollout (\(C_i=1\)) at its first request into \(\mathcal{T}_i^{\mathrm{pre}}\), \(\mathcal{T}_i^{\mathrm{call}}\), and \(\mathcal{T}_i^{\mathrm{post}}\), covering the tokens before the request turn, the request turn itself, and the continuation after feedback. Observation and feedback tokens are excluded from the policy loss. We bound the adjustment by clipping the prompt-level gap, \(g_i=\operatorname{clip}(u_i,0,b)\) with \(b>0\), and apply it only when \(A_i>0\) and \(g_i>0\). Eligible rollouts receive:
\begin{equation}
    \widetilde{A}_{i,t}
    =
    \begin{cases}
        \max\!\left(A_i-\lambda g_i,0\right),
        & t\in\mathcal{T}_i^{\mathrm{pre}}, \\[1mm]
        A_i,
        & t\in\mathcal{T}_i^{\mathrm{call}}, \\[1mm]
        A_i+\lambda g_i,
        & t\in\mathcal{T}_i^{\mathrm{post}},
    \end{cases}
    \label{eq:feedback_advantage_shaping}
\end{equation}
where \(\lambda\geq0\) controls the shaping strength. Together the two components answer RQ2: CFE decides across rollouts when a request is worth making, while advantage shaping decides within a rollout which behavior the resulting success should credit.

\subsection{Offline Feedback Optimization and Iterative Co-evolution}
\label{sec:offline_feedback_training}
\noindent\textbf{RQ3.}\ \emph{How can a shared model co-evolve its agent and feedback capabilities?}

Online policy updates continually shift the states and failures the agent encounters, so feedback learned from earlier trajectories may lose relevance. Conversely, improved feedback changes which failures the agent can recover from and thus the rollouts that drive subsequent learning. 
As illustrated in \Cref{fig:method_overview}, CAFE addresses this coupling by alternating online agent optimization with rollout-derived preference optimization (RDPO), which refines the critic from preference pairs mined from the latest on-policy rollouts. Because both roles share parameters, each update changes the data distribution on which the other role is refined.

\paragraph{Outcome-Guided Preference Filtering.}
To optimize the feedback side of this loop, we need to distinguish useful from ineffective guidance. Yet the environment scores only the final answer, which also depends on the preceding search and subsequent actions. We therefore construct outcome-labeled preferences from matched on-policy rollouts. At iteration \(k\), we group the latest rollouts \(\mathcal{R}_k\) by prompt and pair a successful feedback-requesting rollout with an unsuccessful one. After structural filtering, we retain pairs with similar histories at the first request and comparable feedback lengths. Under the successful history as shared context, its feedback \(f^{+}\) is chosen and the feedback \(f^{-}\) from the matched failed rollout is rejected. The retained pairs form \(\mathcal{D}_{k}^{\mathrm{fb}}\). More details can be found in \Cref{app:offline_pairing}.

\paragraph{Offline Feedback Update and Iteration.}
Starting from the online checkpoint \(\theta_{k+\frac12}\), we apply RDPO to \(\mathcal{D}_{k}^{\mathrm{fb}}\), directly preferring feedback associated with successful recovery over matched feedback from unsuccessful trajectories. Because the agent and critic roles share all parameters, RDPO updates the same full-model checkpoint and produces \(\theta_{k+1}\), rather than training a separate critic model. The next online segment then samples fresh trajectories from \(\theta_{k+1}\), from which we rebuild \(\mathcal{D}_{k+1}^{\mathrm{fb}}\) for the subsequent offline update. Through this alternating update, CAFE keeps feedback training aligned with the evolving policy while allowing improved feedback to shape the next round of on-policy experience.
\begin{table}[t!]
\centering

\setlength{\tabcolsep}{4pt}
\renewcommand{\arraystretch}{1.2}
\resizebox{\linewidth}{!}{%
\begin{tabular}{l|cccccccccccccccc}
\Xhline{1.2pt}
\rowcolor{CadetBlue!20}
 & \multicolumn{2}{c}{\textbf{2Wiki}} & \multicolumn{2}{c}{\textbf{HotpotQA}} & \multicolumn{2}{c}{\textbf{MuSiQue}} & \multicolumn{2}{c}{\textbf{PopQA}} & \multicolumn{2}{c}{\textbf{TriviaQA}} & \multicolumn{2}{c}{\textbf{Bamboogle}} & \multicolumn{2}{c}{\textbf{NQ}} & \multicolumn{2}{c}{\textbf{Avg.}} \\
\rowcolor{CadetBlue!20}
\midrule
\textbf{Method} & \textbf{EM} & \textbf{F1} & \textbf{EM} & \textbf{F1} & \textbf{EM} & \textbf{F1} & \textbf{EM} & \textbf{F1} & \textbf{EM} & \textbf{F1} & \textbf{EM} & \textbf{F1} & \textbf{EM} & \textbf{F1} & \textbf{EM} & \textbf{F1} \\
\Xhline{1.2pt}
\multicolumn{17}{c}{\textit{Closed-source Models}} \\
\hline
\rowcolor{gray!10}
GPT-5-Mini~\citeyearpar{singh2026openaigpt5card} & 67.4 & 76.0 & \textbf{59.0} & \textbf{72.0} & 27.2 & 36.5 & 34.6 & 41.1 & 73.0 & 81.2 & \textbf{62.4} & \textbf{73.2} & 31.2 & 42.6 & 50.7 & 60.4\\
\rowcolor{gray!10}
Gemini-2.5-Flash~\citeyearpar{comanici2025gemini25pushingfrontier} & 63.8 & 73.6 & 54.0 & 67.6 & 24.4 & 35.7 & 40.2 & 49.2 & \uline{73.4} & \uline{81.5} & 57.6 & 65.9 & 39.8 & \textbf{53.0} & 50.5 & \textbf{60.9}\\
\rowcolor{gray!10}
Claude-4.5-haiku~\citeyearpar{anthropic2025claudehaiku45systemcard} & 61.2 & 68.0 & 51.0 & 63.0 & 17.0 & 22.2 & 36.8 & 43.3 & 66.8 & 73.6 & 52.8 & 62.4 & 32.8 & 42.9 & 45.5 & 53.6\\
\hline
\multicolumn{17}{c}{\textit{Large Open-source Models}} \\
\hline
\rowcolor{gray!10}
Kimi-K2-thinking~\citeyearpar{kimiteam2026kimik2openagentic} & 67.8 & 76.2 & \uline{57.4} & 69.9 & 29.4 & 37.8 & 33.4 & 38.9 & 72.0 & 78.5 & 50.4 & 58.7 & 30.2 & 41.1 & 48.7 & 57.3\\
\rowcolor{gray!10}
GLM-4.7~\citeyearpar{5team2025glm45agenticreasoningcoding} & 64.4 & 74.6 & \uline{57.4} & \uline{70.9} & 24.8 & 33.6 & 34.4 & 41.2 & \textbf{73.6} & \textbf{81.6} & \uline{58.4} & \uline{69.3} & 33.0 & 45.2 & 49.4 & 59.5\\
\rowcolor{gray!10}
Qwen2.5-72B-Instruct~\citeyearpar{qwen2025qwen25technicalreport} & 61.4 & 69.4 & 52.6 & 65.1 & 26.0 & 35.0 & 35.4 & 43.7 & 66.8 & 74.7 & 57.6 & 66.6 & 35.8 & 45.8 & 47.9 & 57.2\\
\rowcolor{gray!10}
DeepSeek-V4-Flash-preview~\citeyearpar{deepseekai2026deepseekv4highlyefficientmilliontoken} & 59.6 & 66.8 & 51.0 & 62.6 & 17.8 & 23.5 & 22.8 & 26.8 & 61.4 & 67.7 & 47.2 & 52.8 & 28.2 & 38.0 & 41.1 & 48.3\\
\hline
\multicolumn{17}{c}{\textit{RL-based Search Agent Baselines}} \\
\hline
\rowcolor{gray!10}
WebSeer$^{\dagger}$~\citeyearpar{he2026webseer} & 57.8 & 72.3 & 47.8 & 61.3 & 22.6 & 35.1 & 31.6 & 40.8 & 57.8 & 67.3 & 47.2 & 61.2 & 21.6 & 31.9 & 40.9 & 52.8\\
\rowcolor{gray!10}
Search-R1$^{*}$ ~\citeyearpar{jin2025searchr1trainingllmsreason} & 67.0 & 75.4 & 48.4 & 60.9 & 25.8 & 36.2 & 41.0 & 46.9 & 65.0 & 70.8 & 47.2 & 58.4 & 39.8 & 49.1 & 47.7 & 56.8\\
\rowcolor{gray!10}
R-Search$^{*}$ ~\citeyearpar{R-search} & 69.8 & 77.7 & 52.2 & 64.4 & \uline{31.4} & \textbf{41.6} & 41.8 & 48.1 & 64.2 & 71.7 & 42.4 & 57.6 & 38.0 & 49.1 & 48.5 & 58.6\\
\rowcolor{gray!10}
IGPO$^{\ddagger}$ ~\citeyearpar{IGPO} & 79.6 & 86.1 & 53.4 & 65.1 & 27.8 & 36.9 & 43.2 & 49.1 & 63.2 & 71.4 & 48.8 & 59.1 & 36.8 & 48.1 & 50.4 & 59.4\\
\rowcolor{gray!10}
StepSearch$^{\dagger}$~\citeyearpar{StepSearch} & 52.6 & 63.2 & 45.2 & 54.5 & 29.2 & 38.8 & 32.2 & 39.1 & 53.2 & 61.7 & 40.0 & 51.2 & 33.6 & 44.1 & 40.9 & 50.4\\
\hline
\multicolumn{17}{c}{\textit{CAFE}} \\
\hline
\rowcolor{gray!10}
Qwen2.5-7B-Instruct~\citeyearpar{qwen2025qwen25technicalreport} & 44.8 & 54.4 & 41.8 & 53.6 & 20.4 & 28.8 & 34.4 & 42.1 & 56.4 & 65.6 & 37.6 & 49.3 & 31.6 & 41.3 & 38.1 & 47.9\\
\rowcolor{gray!10}
\hspace{1em}+ SFT & 63.1 & 72.4 & 44.2 & 55.0 & 19.0 & 27.5 & 35.2 & 42.1 & 52.6 & 62.5 & 42.4 & 51.8 & 28.8 & 39.1 & 40.8 & 50.1\\
\rowcolor{gray!10}
\hspace{1em}+ SFT + GRPO & 80.6 & 86.6 & 50.8 & 61.0 & 27.2 & 36.6 & 44.6 & 49.0 & 60.4 & 67.9 & 45.6 & 56.2 & 38.4 & 48.6 & 49.7 & 58.0\\
\hline
\rowcolor{gray!10}
\textbf{\hspace{1em}+ SFT + CAFE} & \textbf{84.0} & \textbf{89.2} & 53.4 & 64.8 & 30.2 & 39.1 & \textbf{46.4} & \textbf{51.6} & 62.6 & 69.9 & 50.4 & 61.4 & \textbf{40.8} & 49.2 & \textbf{52.5} & \uline{60.7}\\
\Xhline{1.2pt}
\end{tabular}%
}
\caption{Main results on seven agentic SearchQA benchmarks. Best and second-best results are \textbf{bolded} and \uline{underlined}. \(\dagger\) Released checkpoint evaluated under our protocol. \(*\) Results reported in the original paper. \(\ddagger\) Our reproduction initialized from our SFT checkpoint.}
\label{tab:main_results}
\end{table}
\begin{table}[t]
\centering
\setlength{\tabcolsep}{4pt}
\renewcommand{\arraystretch}{1.2}
\resizebox{\linewidth}{!}{%
\begin{tabular}{cc|cccccccccccccccc}
\Xhline{1.2pt}
\rowcolor{CadetBlue!20}
\multicolumn{2}{c|}{\textbf{Online RL}}
& \multicolumn{2}{c}{\textbf{2Wiki}}
& \multicolumn{2}{c}{\textbf{HotpotQA}}
& \multicolumn{2}{c}{\textbf{MuSiQue}}
& \multicolumn{2}{c}{\textbf{PopQA}}
& \multicolumn{2}{c}{\textbf{TriviaQA}}
& \multicolumn{2}{c}{\textbf{Bamboogle}}
& \multicolumn{2}{c}{\textbf{NQ}}
& \multicolumn{2}{c}{\textbf{Avg.}} \\
\rowcolor{CadetBlue!20}
\midrule
\textbf{CFE} & \textbf{Adv.\ Shaping}
& \textbf{EM} & \textbf{F1}
& \textbf{EM} & \textbf{F1}
& \textbf{EM} & \textbf{F1}
& \textbf{EM} & \textbf{F1}
& \textbf{EM} & \textbf{F1}
& \textbf{EM} & \textbf{F1}
& \textbf{EM} & \textbf{F1}
& \textbf{EM} & \textbf{F1} \\
\Xhline{1.2pt}
\rowcolor{gray!10}
\xmark & \xmark & 80.6 & 86.6 & 50.8 & 61.0 & 27.2 & 36.6 & 44.6 & \uline{49.0} & 60.4 & 67.9 & 45.6 & 56.2 & 38.4 & \uline{48.6} & 49.7 & 58.0\\
\rowcolor{gray!10}
\cmark & \xmark & \uline{83.2} & 88.0 & \uline{52.4} & 63.0 & 28.4 & 36.5 & \uline{45.0} & 48.9 & 60.8 & 68.5 & 47.2 & 56.6 & \uline{38.7} & 47.4 & 50.8 & 58.4\\
\rowcolor{gray!10}
\xmark & \cmark & \uline{83.2} & \uline{88.2} & \uline{52.4} & \uline{63.2} & \uline{30.6} & \uline{38.5} & 44.6 & 48.9 & \textbf{61.8} & \textbf{69.6} & \uline{47.2} & \uline{58.9} & 38.6 & 48.2 & \uline{51.2} & \uline{59.4}\\
\rowcolor{gray!10}
\cmark & \cmark & \textbf{83.4} & \textbf{88.3} & \textbf{53.0} & \textbf{63.9} & \textbf{31.8} & \textbf{40.5} & \textbf{46.2} & \textbf{50.8} & \uline{61.2} & \uline{69.2} & \textbf{48.0} & \textbf{60.5} & \textbf{40.0} & \textbf{48.8} & \textbf{51.9} & \textbf{60.3}\\
\Xhline{1.2pt}
\end{tabular}%
}
\caption{Online optimization ablation across seven benchmarks. Best and second-best results are \textbf{bolded} and \uline{underlined}. \textbf{CFE} augments the task reward with the prompt-level call--skip success gap and a repeated-request cost; \textbf{Adv.\ Shaping} reweights pre- and post-feedback token advantages.}
\label{tab:online_component_ablation}
\end{table}
\section{Experiments}
\paragraph{Dataset and Metrics.} We evaluate our method on seven agentic SearchQA benchmarks. We use 2WikiMultihopQA~\citep{2wiki} as the sole in-domain benchmark and assess out-of-domain generalization on HotpotQA~\citep{yang-etal-2018-hotpotqa}, MuSiQue~\citep{musique}, PopQA~\citep{popqa}, Bamboogle~\citep{bamboogle}, Natural Questions~\citep{nq}, and TriviaQA~\citep{joshi-etal-2017-triviaqa}. We report Exact Match (EM) and token-level F1 scores. The dataset characteristics, versions, and evaluation sizes are provided in Appendix~\ref{app:dataset_details}. 

\paragraph{Baselines and Implementation.} We compare against three groups of baselines. \textbf{(i) Closed-source LLMs.} GPT-5-Mini~\citep{singh2026openaigpt5card}, Gemini-2.5-Flash~\citep{comanici2025gemini25pushingfrontier}, and Claude-4.5-Haiku~\citep{anthropic2025claudehaiku45systemcard} are prompted as search agents without task-specific training. \textbf{(ii) Large open-source LLMs.} We also include Kimi-K2-Thinking~\citep{kimiteam2026kimik2openagentic}, GLM-4.7~\citep{5team2025glm45agenticreasoningcoding}, DeepSeek-V4-Flash~\citep{deepseekai2026deepseekv4highlyefficientmilliontoken}, and Qwen2.5-72B-Instruct~\citep{qwen2025qwen25technicalreport}. We evaluate all models in groups (i)--(ii) under our protocol. \textbf{(iii) RL-based search agents.} We evaluate the released checkpoints of WebSeer-14B\(^{\dagger}\)~\citep{he2026webseer} and StepSearch\(^{\dagger}\)~\citep{StepSearch} on our evaluation sets. Search-R1\(^{*}\)~\citep{jin2025searchr1trainingllmsreason} and R-Search\(^{*}\)~\citep{R-search} use the results reported in the original papers. IGPO\(^{\ddagger}\)~\citep{IGPO} is reproduced by applying its RL procedure to our feedback-based SFT checkpoint. We use Qwen2.5-7B-Instruct~\citep{qwen2025qwen25technicalreport} as the shared backbone for both CAFE roles. Full implementation details are provided in \Cref{sec:implementation_detail}.

\paragraph{Main Results.}
\Cref{tab:main_results} shows three main findings. (1) With a 7B backbone, CAFE achieves the highest average EM (52.5) and the second-highest average F1 (60.7) among all evaluated methods, outperforming the strongest RL-based baseline IGPO by 2.1 EM and 1.3 F1. (2) The lower block shows steady gains across training stages. Feedback-augmented SFT improves the backbone from 38.1/47.9 to 40.8/50.1 in average EM/F1, while GRPO raises the scores to 49.7/58.0. CAFE further reaches 52.5/60.7, confirming that iterative feedback optimization provides gains beyond a stronger search policy alone. (3) Compared with GRPO, CAFE improves both metrics on every benchmark, including all six out-of-domain datasets. Relative to Search-R1, its average gains are 7.4 EM and 5.9 F1 across the four multi-hop benchmarks, compared with 1.3 EM and 1.3 F1 across the three single-hop benchmarks. This gap is consistent with our motivation: in-context feedback can correct an intermediate search error before it affects the remaining retrieval steps.

At 3B scale, CAFE again improves substantially over the initial checkpoint, reaching performance comparable to several 7B search baselines (full results in \Cref{tab:model_size_ablation}). To test CAFE in a more challenging long-horizon deep-research setting, we evaluate the 7B checkpoints on BrowseComp-Plus~\citep{chen2025browsecompplusfairtransparentevaluation}. Performance improves at each training stage, with CAFE achieving the best result (\Cref{tab:browsecomp_plus_results}), extending the same trend beyond standard SearchQA benchmarks.

\section{Ablation And Analysis}
\label{sec:analysis_ablation}

\subsection{Component Ablations}
\label{sec:component_ablation}

\begin{table}[t]
\centering
\setlength{\tabcolsep}{4pt}
\renewcommand{\arraystretch}{1.2}
\resizebox{\linewidth}{!}{%
\begin{tabular}{cc|cccccccccccccccc}
\Xhline{1.2pt}
\rowcolor{CadetBlue!20}
\multicolumn{2}{c|}{\textbf{Online RL}}
& \multicolumn{2}{c}{\textbf{2Wiki}}
& \multicolumn{2}{c}{\textbf{HotpotQA}}
& \multicolumn{2}{c}{\textbf{MuSiQue}}
& \multicolumn{2}{c}{\textbf{PopQA}}
& \multicolumn{2}{c}{\textbf{TriviaQA}}
& \multicolumn{2}{c}{\textbf{Bamboogle}}
& \multicolumn{2}{c}{\textbf{NQ}}
& \multicolumn{2}{c}{\textbf{Avg.}} \\
\rowcolor{CadetBlue!20}
\midrule
\textbf{CFE} & \textbf{Adv.\ Shaping}
& \textbf{EM} & \textbf{F1}
& \textbf{EM} & \textbf{F1}
& \textbf{EM} & \textbf{F1}
& \textbf{EM} & \textbf{F1}
& \textbf{EM} & \textbf{F1}
& \textbf{EM} & \textbf{F1}
& \textbf{EM} & \textbf{F1}
& \textbf{EM} & \textbf{F1} \\
\Xhline{1.2pt}
\multicolumn{18}{c}{\textit{Rollout-Derived SFT (RSFT)}} \\
\hline
\rowcolor{gray!10}
\xmark & \xmark & 80.0 & 85.1 & \textbf{54.2} & \uline{64.0} & 26.8 & 35.9 & \uline{46.0} & \textbf{51.8} & 60.8 & 68.3 & \textbf{50.4} & \uline{59.7} & 36.6 & 45.9 & 50.7 & 58.7\\
\rowcolor{gray!10}
\cmark & \xmark & 80.6 & 85.2 & 51.4 & 62.0 & 26.4 & 35.4 & 43.2 & 47.5 & 61.0 & 67.8 & 44.8 & 56.3 & 36.8 & 47.3 & 49.2 & 57.4\\
\rowcolor{gray!10}
\xmark & \cmark & 79.8 & 85.7 & 51.0 & 62.2 & 26.4 & 35.0 & 44.2 & 49.0 & 62.6 & 69.9 & \uline{49.6} & 58.8 & \uline{40.6} & \uline{49.3} & 50.6 & 58.6\\
\rowcolor{gray!10}
\cmark & \cmark & 82.4 & 87.2 & 52.4 & 63.1 & 27.2 & 36.2 & \uline{46.0} & 49.8 & \uline{63.2} & \uline{70.0} & 48.0 & 56.2 & 39.0 & 49.1 & 51.2 & 58.8\\
\hline
\multicolumn{18}{c}{\textit{Rollout-Derived DPO (RDPO)}} \\
\hline
\rowcolor{gray!10}
\xmark & \xmark & 81.4 & 86.9 & 51.8 & 62.9 & 27.4 & 36.4 & 45.4 & 50.2 & 61.2 & 68.4 & 47.2 & 56.3 & 37.6 & 48.3 & 50.3 & 58.5\\
\rowcolor{gray!10}
\cmark & \xmark & \uline{83.2} & \uline{87.7} & 52.6 & 63.4 & 27.6 & 37.2 & 45.6 & 50.2 & 61.4 & 68.4 & \uline{49.6} & 59.2 & 39.2 & \textbf{49.7} & \uline{51.3} & 59.4\\
\rowcolor{gray!10}
\xmark & \cmark & 83.0 & \uline{87.7} & 52.6 & 63.8 & \uline{28.0} & \uline{37.6} & 45.6 & 50.3 & \textbf{64.0} & \textbf{71.5} & 47.2 & 58.2 & 37.2 & 47.2 & 51.1 & \uline{59.5}\\
\rowcolor{gray!10}
\cmark & \cmark & \textbf{84.0} & \textbf{89.2} & \uline{53.4} & \textbf{64.8} & \textbf{30.2} & \textbf{39.1} & \textbf{46.4} & \uline{51.6} & 62.6 & 69.9 & \textbf{50.4} & \textbf{61.4} & \textbf{40.8} & 49.2 & \textbf{52.5} & \textbf{60.7}\\
\Xhline{1.2pt}
\end{tabular}%
}
\caption{Offline optimization ablation under a matched five-iteration schedule with 100 online RL steps per iteration. Best and second-best results are \textbf{bolded} and \uline{underlined}. Row groups specify the offline objective (RSFT or RDPO), while the columns indicate the online RL components used to train the starting checkpoint.}
\label{tab:offline_component_ablation}
\end{table}

\begin{figure}[t]
\centering
\begin{subfigure}[b]{0.385\linewidth}
  \centering
  \includegraphics[height=1.55in]{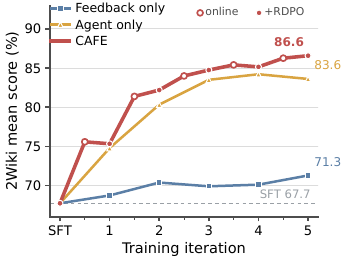}
  \caption{Agent--feedback co-evolution}
  \label{fig:coevolution_protocols}
\end{subfigure}%
\hfill%
\begin{subfigure}[b]{0.305\linewidth}
  \centering
  \includegraphics[height=1.55in]{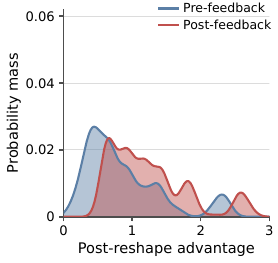}
  \caption{Early training}
  \label{fig:adv_early}
\end{subfigure}%
\hfill%
\begin{subfigure}[b]{0.305\linewidth}
  \centering
  \includegraphics[height=1.55in]{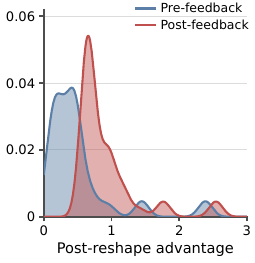}
  \caption{Late training}
  \label{fig:adv_late}
\end{subfigure}
\caption{Analysis of co-evolution and feedback-aware credit assignment. In panel (a), agent-only optimization pairs the evolving agent with the frozen SFT feedback model, whereas feedback-only optimization pairs the evolving feedback model with the frozen SFT agent. For CAFE, open markers denote intermediate online-RL checkpoints and filled markers denote the RDPO checkpoints.}
\label{fig:analysis_panels}
\end{figure}

\paragraph{Online Optimization.}
\Cref{tab:online_component_ablation} compares CFE and advantage shaping under the same SFT initialization and RL budget. CFE alone raises the average EM/F1 from 49.7/58.0 to 50.8/58.4. The gain is modest but consistent with its purpose: CFE changes the return associated with requesting feedback. Advantage shaping has a larger effect, reaching 51.2/59.4 by differentially weighting agent tokens before and after feedback. This difference is also reflected in \Cref{fig:adv_early,fig:adv_late}. The two segments begin with similar advantage distributions, but later in training the pre-feedback advantages concentrate near zero while the post-feedback advantages shift toward a positive mode. Using both components gives the best result at 51.9 EM and 60.3 F1, with improvements on every benchmark. Full test-accuracy and empirical routing-entropy trajectories over 500 steps are reported in \Cref{app:training_dynamics,fig:training_dynamics}. CAFE achieves the highest late-stage test accuracy while maintaining higher feedback-policy entropy. The largest EM gain occurs on MuSiQue, while the largest F1 gain occurs on Bamboogle. Both are multi-hop benchmarks where correcting an intermediate error can affect several subsequent retrieval steps.

\paragraph{Offline Optimization.}
Learning feedback from rollout outcomes is not straightforward because the terminal label applies to the entire trajectory rather than to the feedback itself.
We compare RDPO with rollout-derived SFT (RSFT), which uses the same mining pipeline as RDPO but retains only feedback from successful rollouts. This positive-only objective is noisy: a trajectory may succeed because of its search prefix or subsequent actions even when the feedback is uninformative. RDPO instead preserves the comparison with a failed rollout matched by prompt and pre-feedback state, providing a cleaner signal for feedback quality. The relative objective is also better suited to the shared model, since it remains anchored to the online checkpoint, whereas maximum-likelihood fitting can shift both roles without such a constraint. RDPO outperforms RSFT in three of four settings (\Cref{tab:offline_component_ablation}). The conducted training schedule comparison in \Cref{app:iteration_schedule} further identifies \(100\times5\) as the strongest alternation schedule, motivating our default.

\subsection{Co-evolution Analysis}
\label{sec:coevolution_analysis}
We compare five rounds of feedback-only, agent-only, and alternating optimization from the same feedback-augmented SFT checkpoint. In the agent-only control, online RL updates the agent while every feedback request is answered by a frozen copy of the SFT model. In the feedback-only control, the SFT agent remains fixed while RDPO updates the model that generates its requested feedback. CAFE instead alternates online RL and RDPO, with the latest shared checkpoint serving both roles. Performance is measured on 2Wiki using the mean of EM and F1. As shown in \Cref{fig:coevolution_protocols}, feedback-only optimization improves the score from 67.7 to 71.3, while agent-only optimization peaks at 84.2 before ending at 83.6. Alternating optimization reaches 86.6, outperforming the final agent-only checkpoint by 3.0 points. The one-sided controls show that improving either capability helps, while updating both allows the gains to continue across iterations.

We further test whether these gains reflect stage-specific alignment rather than a uniformly stronger critic by cross-playing agent and critic checkpoints across iterations. As shown in \Cref{fig:coevolution_crossplay_heatmaps}, from iterations 3 through 5, each agent performs best with the critic from the same iteration. Holding the final agent fixed and replacing the SFT critic with the iteration-5 critic raises EM from \(80.6\) to \(84.0\) and F1 from \(86.6\) to \(89.2\). Conversely, the iteration-5 critic is not universally best for earlier agents, indicating that the gains arise from alignment with the policy's evolving failure distribution rather than critic strength alone.

\begin{figure}[t]
\centering

\includegraphics[width=0.92\linewidth]{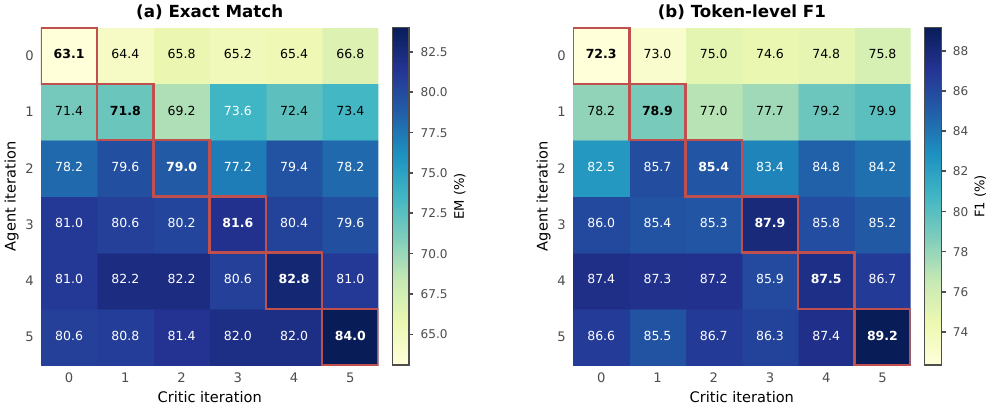}
\caption{Agent--critic cross-play on 2Wiki. Rows and columns denote agent and
critic training iterations, respectively. Iteration 0 is the shared SFT
initialization. Cells report EM or token-level F1, and red boxes mark
same-iteration pairs.}
\label{fig:coevolution_crossplay_heatmaps}
\end{figure}

\subsection{Feedback Evolution Analysis}
\label{sec:feedback_evolution_analysis}
To track how feedback evolves during training, \Cref{fig:feedback_word_trend} visualizes frequent terms from trajectories collected after iterations 1, 3, and 5, representing the early, middle, and late stages. Early feedback is dominated by retrieval and grounding errors, including \textbf{\emph{misread results}} and \textbf{\emph{conflated entities}}. In the middle stage, the emphasis shifts toward careful evidence verification, reflected by terms such as \textbf{\emph{valid answers}}, and \textbf{\emph{explicitly stated}}. Late feedback increasingly targets residual search and reasoning inefficiencies, including \textbf{\emph{repeated query}}, \textbf{\emph{redundant tool calls}}, and \textbf{\emph{logic fails}}. This progression indicates that as basic retrieval and grounding failures recede, the critic adapts to the agent's evolving error profile by focusing increasingly on higher-level planning and execution errors.

\subsection{Hallucination Analysis}
\label{sec:hallucination_analysis}
Long-horizon search requires an agent to integrate evidence across many retrieval and reasoning steps, making the final answer vulnerable to unsupported claims carried forward from earlier errors. We therefore evaluate answer-level hallucination, marking an answer as hallucinated if it contains at least one factual claim unsupported by the evidence retrieved along its search trajectory. The base model produces an average hallucination rate of 29.9\%, which drops to 17.6\% after outcome-reward GRPO and further to 12.6\% with CAFE. As shown in \Cref{fig:hallucination_rate}, CAFE reduces hallucinations relative to GRPO on every benchmark, with the largest reductions on NQ (10.8 percentage points) and MuSiQue (9.4 points).

\begin{figure}[t]

\centering
\begin{subfigure}[b]{0.56\linewidth}
  \centering
  \includegraphics[height=1.30in]{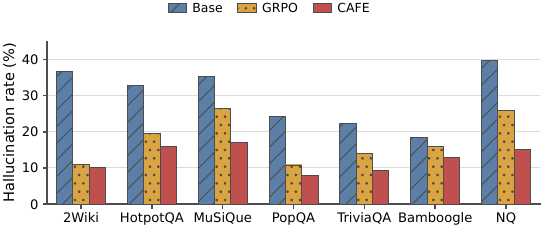}
  \caption{Hallucination rate}
  \label{fig:hallucination_rate}
\end{subfigure}%
\hfill%
\begin{subfigure}[b]{0.42\linewidth}
  \centering
  \includegraphics[height=1.50in]{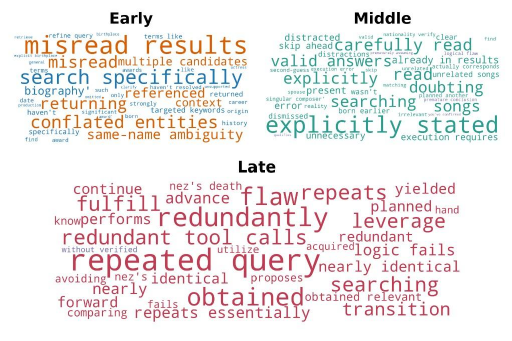}
  \caption{Feedback-content evolution}
  \label{fig:feedback_word_trend}
\end{subfigure}
\caption{Analysis of answer grounding and feedback evolution. (a) Hallucination rates across seven benchmarks, where lower is better. (b) Dominant feedback terms during early, middle, and late training.}

\label{fig:feedback_quality_analysis}
\end{figure}

\section{Related Work}
\textbf{Search Agent and Agentic Credit Assignment.}
Search agents have progressed from pipelines that interleave reasoning and
retrieval \citep{trivedi2023interleaving,ReAct} to outcome-supervised policies
that learn when and how to search
\citep{jin2025searchr1trainingllmsreason,song2025r1searcherincentivizingsearchcapability}.
Their long trajectories nevertheless retain a sparse-credit problem: terminal
correctness does not identify which intermediate decisions were useful. Recent
methods provide finer signals through information gain and confidence changes
\citep{StepSearch,IGPO,TIPS}, local state comparisons and advantage shaping
\citep{GiGPO,DeepPlanner}, diagnostic or directional signals
\citep{CriticSearch,T_3,Information-Self-Locking}, and process rewards, search
hints, or multi-agent refinement \citep{InfoFlow}. These approaches sharpen
supervision for intermediate search behavior. Once explicit feedback
intervenes, however, success also couples the behavior that prompted the
request with the recovery that followed it, creating a distinct
feedback-conditioned credit boundary.

\textbf{Self-Reflection and Corrective Feedback.}
Natural-language correction has been explored through prompted inference-time
loops \citep{reflexion,Self-Refine,CRITIC} and through trained
self-verification, self-correction, or critique models
\citep{ScoRe,S2R,CTRL}. In search settings, ReSeek equips trajectories with
evidence judgments and replanning \citep{reseek}, whereas WebSeer uses
answer-submission-triggered outcome feedback to support continued search
\citep{he2026webseer}. More closely related, ECHO co-evolves separate policy
and critic models through score-aware hindsight refinement, but its critic is
invoked only after a trajectory is completed and therefore cannot redirect the
ongoing search before errors compound \citep{li-etal-2026-stale}. CAFE instead
treats feedback as an optional intervention within the active trajectory.

\textbf{Self-Evolving Agents.}
Self-evolving agents use generated experience to update task policies,
supervisory signals, or system components. EvolveSearch alternates supervised
fine-tuning on filtered trajectories with reinforcement learning exploration
\citep{evolvesearch}, while Self-Rewarding and EvoLM jointly improve task
policies and supervisory signals \citep{Self-Rewarding,evolm}. Retroformer
updates reflection and prompt revision from environmental feedback
\citep{yao2024retroformer}; AFlow, G\"odel Agent, and ADAS extend optimization
to agent designs, workflows, and runtime logic
\citep{zhang2025aflow,DBLP:conf/acl/YinWPL0W25,ADAS}. These lines of work
optimize different parts of the improvement loop. We study their coupling
across two timescales: feedback is requested and used within a trajectory,
while recent rollout outcomes update feedback generation across iterations,
changing the experience available to both roles in the next round.
\section{Conclusion}
We introduced CAFE, a shared-model framework that jointly adapts an agent's ability to use feedback and its ability to generate it. Online RL trains the agent to request and act on feedback, while rollout-derived preference optimization updates the critic using recent trajectories. Across seven search QA benchmarks, CAFE achieves the strongest average performance among the evaluated RL-based agents, transfers to six out-of-domain datasets, and reduces answer-level hallucinations. The broader lesson is that acting and critiquing form a coupled learning system: each changes the experience from which the other improves. A self-improving agent therefore needs feedback that evolves with its policy, rather than a fixed supervisor tied to an earlier distribution of failures.
\clearpage
\bibliography{iclr2026_conference}

@inproceedings{R-search,
  author       = {Qingfei Zhao and
                  Ruobing Wang and
                  Dingling Xu and
                  Daren Zha and
                  Bowen Ma and
                  Zhichun Wang and
                  Shijie Jia and
                  Limin Liu and
                  Xin Wang},
  editor       = {Maria Liakata and
                  Viviane P. Moreira and
                  Jiajun Zhang and
                  David Jurgens},
  title        = {R-Search: Empowering {LLM} Reasoning with Search via Multi-Reward
                  Reinforcement Learning},
  booktitle    = {Findings of the Association for Computational Linguistics, {ACL} 2026,
                  San Diego, California, United States, July 2-7, 2026},
  pages        = {38030--38046},
  publisher    = {Association for Computational Linguistics},
  year         = {2026},
  url          = {https://aclanthology.org/2026.findings-acl.1896/},
  bibsource    = {dblp computer science bibliography, https://dblp.org}
}

@misc{jin2025searchr1trainingllmsreason,
      title={Search-R1: Training LLMs to Reason and Leverage Search Engines with Reinforcement Learning}, 
      author={Bowen Jin and Hansi Zeng and Zhenrui Yue and Jinsung Yoon and Sercan Arik and Dong Wang and Hamed Zamani and Jiawei Han},
      year={2025},
      eprint={2503.09516},
      archivePrefix={arXiv},
      primaryClass={cs.CL},
      url={https://arxiv.org/abs/2503.09516}, 
}

@misc{song2025r1searcherincentivizingsearchcapability,
      title={R1-Searcher: Incentivizing the Search Capability in LLMs via Reinforcement Learning}, 
      author={Huatong Song and Jinhao Jiang and Yingqian Min and Jie Chen and Zhipeng Chen and Wayne Xin Zhao and Lei Fang and Ji-Rong Wen},
      year={2025},
      eprint={2503.05592},
      archivePrefix={arXiv},
      primaryClass={cs.AI},
      url={https://arxiv.org/abs/2503.05592}, 
}

@inproceedings{2wiki,
  author       = {Xanh Ho and
                  Anh{-}Khoa Duong Nguyen and
                  Saku Sugawara and
                  Akiko Aizawa},
  editor       = {Donia Scott and
                  N{\'{u}}ria Bel and
                  Chengqing Zong},
  title        = {Constructing {A} Multi-hop {QA} Dataset for Comprehensive Evaluation
                  of Reasoning Steps},
  booktitle    = {Proceedings of the 28th International Conference on Computational
                  Linguistics, {COLING} 2020, Barcelona, Spain (Online), December 8-13,
                  2020},
  pages        = {6609--6625},
  publisher    = {International Committee on Computational Linguistics},
  year         = {2020},
  url          = {https://doi.org/10.18653/v1/2020.coling-main.580},
  doi          = {10.18653/V1/2020.COLING-MAIN.580},
  bibsource    = {dblp computer science bibliography, https://dblp.org}
}

@inproceedings{yang-etal-2018-hotpotqa,
    title = "{H}otpot{QA}: A Dataset for Diverse, Explainable Multi-hop Question Answering",
    author = "Yang, Zhilin  and
      Qi, Peng  and
      Zhang, Saizheng  and
      Bengio, Yoshua  and
      Cohen, William  and
      Salakhutdinov, Ruslan  and
      Manning, Christopher D.",
    editor = "Riloff, Ellen  and
      Chiang, David  and
      Hockenmaier, Julia  and
      Tsujii, Jun{'}ichi",
    booktitle = "Proceedings of the 2018 Conference on Empirical Methods in Natural Language Processing",
    month = oct # "-" # nov,
    year = "2018",
    address = "Brussels, Belgium",
    publisher = "Association for Computational Linguistics",
    url = "https://aclanthology.org/D18-1259/",
    doi = "10.18653/v1/D18-1259",
    pages = "2369--2380"
}

@article{musique,
  author       = {Harsh Trivedi and
                  Niranjan Balasubramanian and
                  Tushar Khot and
                  Ashish Sabharwal},
  title        = {{MuSiQue}: Multihop Questions via Single-hop Question
                  Composition},
  journal      = {Trans. Assoc. Comput. Linguistics},
  volume       = {10},
  pages        = {539--554},
  year         = {2022},
  url          = {https://doi.org/10.1162/tacl\_a\_00475},
  doi          = {10.1162/TACL\_A\_00475},
  bibsource    = {dblp computer science bibliography, https://dblp.org}
}

@inproceedings{popqa,
    title = "When Not to Trust Language Models: Investigating Effectiveness of Parametric and Non-Parametric Memories",
    author = "Mallen, Alex  and
      Asai, Akari  and
      Zhong, Victor  and
      Das, Rajarshi  and
      Khashabi, Daniel  and
      Hajishirzi, Hannaneh",
    editor = "Rogers, Anna  and
      Boyd-Graber, Jordan  and
      Okazaki, Naoaki",
    booktitle = "Proceedings of the 61st Annual Meeting of the Association for Computational Linguistics (Volume 1: Long Papers)",
    month = jul,
    year = "2023",
    address = "Toronto, Canada",
    publisher = "Association for Computational Linguistics",
    url = "https://aclanthology.org/2023.acl-long.546/",
    doi = "10.18653/v1/2023.acl-long.546",
    pages = "9802--9822"
}

@inproceedings{bamboogle,
  author       = {Ofir Press and
                  Muru Zhang and
                  Sewon Min and
                  Ludwig Schmidt and
                  Noah A. Smith and
                  Mike Lewis},
  editor       = {Houda Bouamor and
                  Juan Pino and
                  Kalika Bali},
  title        = {Measuring and Narrowing the Compositionality Gap in Language Models},
  booktitle    = {Findings of the Association for Computational Linguistics: {EMNLP}
                  2023, Singapore, December 6-10, 2023},
  series       = {Findings of {ACL}},
  volume       = {{EMNLP} 2023},
  pages        = {5687--5711},
  publisher    = {Association for Computational Linguistics},
  year         = {2023},
  url          = {https://doi.org/10.18653/v1/2023.findings-emnlp.378},
  doi          = {10.18653/V1/2023.FINDINGS-EMNLP.378},
  bibsource    = {dblp computer science bibliography, https://dblp.org}
}

@article{nq,
    title = "Natural Questions: A Benchmark for Question Answering Research",
    author = "Kwiatkowski, Tom  and
      Palomaki, Jennimaria  and
      Redfield, Olivia  and
      Collins, Michael  and
      Parikh, Ankur  and
      Alberti, Chris  and
      Epstein, Danielle  and
      Polosukhin, Illia  and
      Devlin, Jacob  and
      Lee, Kenton  and
      Toutanova, Kristina  and
      Jones, Llion  and
      Kelcey, Matthew  and
      Chang, Ming-Wei  and
      Dai, Andrew M.  and
      Uszkoreit, Jakob  and
      Le, Quoc  and
      Petrov, Slav",
    editor = "Lee, Lillian  and
      Johnson, Mark  and
      Roark, Brian  and
      Nenkova, Ani",
    journal = "Transactions of the Association for Computational Linguistics",
    volume = "7",
    year = "2019",
    address = "Cambridge, MA",
    publisher = "MIT Press",
    url = "https://aclanthology.org/Q19-1026/",
    doi = "10.1162/tacl_a_00276",
    pages = "452--466"
}

@inproceedings{joshi-etal-2017-triviaqa,
    title = "{T}rivia{QA}: A Large Scale Distantly Supervised Challenge Dataset for Reading Comprehension",
    author = "Joshi, Mandar  and
      Choi, Eunsol  and
      Weld, Daniel  and
      Zettlemoyer, Luke",
    editor = "Barzilay, Regina  and
      Kan, Min-Yen",
    booktitle = "Proceedings of the 55th Annual Meeting of the Association for Computational Linguistics (Volume 1: Long Papers)",
    month = jul,
    year = "2017",
    address = "Vancouver, Canada",
    publisher = "Association for Computational Linguistics",
    url = "https://aclanthology.org/P17-1147/",
    doi = "10.18653/v1/P17-1147",
    pages = "1601--1611"
}

@inproceedings{reflexion,
  author       = {Noah Shinn and
                  Federico Cassano and
                  Ashwin Gopinath and
                  Karthik Narasimhan and
                  Shunyu Yao},
  editor       = {Alice Oh and
                  Tristan Naumann and
                  Amir Globerson and
                  Kate Saenko and
                  Moritz Hardt and
                  Sergey Levine},
  title        = {Reflexion: language agents with verbal reinforcement learning},
  booktitle    = {Advances in Neural Information Processing Systems 36: Annual Conference
                  on Neural Information Processing Systems 2023, NeurIPS 2023, New Orleans,
                  LA, USA, December 10 - 16, 2023},
  year         = {2023},
  url          = {http://papers.nips.cc/paper\_files/paper/2023/hash/1b44b878bb782e6954cd888628510e90-Abstract-Conference.html},
  bibsource    = {dblp computer science bibliography, https://dblp.org}
}

@inproceedings{Self-Refine,
 author = {Madaan, Aman and Tandon, Niket and Gupta, Prakhar and Hallinan, Skyler and Gao, Luyu and Wiegreffe, Sarah and Alon, Uri and Dziri, Nouha and Prabhumoye, Shrimai and Yang, Yiming and Gupta, Shashank and Majumder, Bodhisattwa Prasad and Hermann, Katherine and Welleck, Sean and Yazdanbakhsh, Amir and Clark, Peter},
 booktitle = {Advances in Neural Information Processing Systems},
 editor = {A. Oh and T. Naumann and A. Globerson and K. Saenko and M. Hardt and S. Levine},
 pages = {46534--46594},
 publisher = {Curran Associates, Inc.},
 title = {Self-Refine: Iterative Refinement with Self-Feedback},
 url = {https://proceedings.neurips.cc/paper_files/paper/2023/file/91edff07232fb1b55a505a9e9f6c0ff3-Paper-Conference.pdf},
 volume = {36},
 year = {2023}
}

@inproceedings{CRITIC,
 author = {Gou, Zhibin and Shao, Zhihong and Gong, Yeyun and shen, yelong and Yang, Yujiu and Duan, Nan and Chen, Weizhu},
 booktitle = {International Conference on Learning Representations},
 editor = {B. Kim and Y. Yue and S. Chaudhuri and K. Fragkiadaki and M. Khan and Y. Sun},
 pages = {57734--57811},
 title = {CRITIC: Large Language Models Can Self-Correct with Tool-Interactive Critiquing},
 url = {https://proceedings.iclr.cc/paper_files/paper/2024/file/fef126561bbf9d4467dbb8d27334b8fe-Paper-Conference.pdf},
 volume = {2024},
 year = {2024}
}

@inproceedings{ScoRe,
  author       = {Aviral Kumar and
                  Vincent Zhuang and
                  Rishabh Agarwal and
                  Yi Su and
                  John D. Co{-}Reyes and
                  Avi Singh and
                  Kate Baumli and
                  Shariq Iqbal and
                  Colton Bishop and
                  Rebecca Roelofs and
                  Lei M. Zhang and
                  Kay McKinney and
                  Disha Shrivastava and
                  Cosmin Paduraru and
                  George Tucker and
                  Doina Precup and
                  Feryal M. P. Behbahani and
                  Aleksandra Faust},
  title        = {Training Language Models to Self-Correct via Reinforcement Learning},
  booktitle    = {The Thirteenth International Conference on Learning Representations,
                  {ICLR} 2025, Singapore, April 24-28, 2025},
  publisher    = {OpenReview.net},
  year         = {2025},
  url          = {https://openreview.net/forum?id=CjwERcAU7w},
  bibsource    = {dblp computer science bibliography, https://dblp.org}
}

@article{IGPO,
  author       = {Guoqing Wang and
                  Sunhao Dai and
                  Guangze Ye and
                  Zeyu Gan and
                  Wei Yao and
                  Yong Deng and
                  Xiaofeng Wu and
                  Zhenzhe Ying},
  title        = {Information Gain-based Policy Optimization: {A} Simple and Effective
                  Approach for Multi-Turn {LLM} Agents},
  journal      = {CoRR},
  volume       = {abs/2510.14967},
  year         = {2025},
  url          = {https://doi.org/10.48550/arXiv.2510.14967},
  doi          = {10.48550/ARXIV.2510.14967},
  eprinttype   = {arXiv},
  eprint       = {2510.14967},
  bibsource    = {dblp computer science bibliography, https://dblp.org}
}

@article{TIPS,
  author       = {Yutao Xie and
                  Nathaniel Thomas and
                  Nicklas Hansen and
                  Yang Fu and
                  Li Erran Li and
                  Xiaolong Wang},
  title        = {{TIPS:} Turn-Level Information-Potential Reward Shaping for Search-Augmented
                  LLMs},
  journal      = {CoRR},
  volume       = {abs/2603.22293},
  year         = {2026},
  url          = {https://doi.org/10.48550/arXiv.2603.22293},
  doi          = {10.48550/ARXIV.2603.22293},
  eprinttype   = {arXiv},
  eprint       = {2603.22293},
  bibsource    = {dblp computer science bibliography, https://dblp.org}
}

@inproceedings{S2R,
  author       = {Ruotian Ma and
                  Peisong Wang and
                  Cheng Liu and
                  Xingyan Liu and
                  Jiaqi Chen and
                  Bang Zhang and
                  Xin Zhou and
                  Nan Du and
                  Jia Li},
  editor       = {Wanxiang Che and
                  Joyce Nabende and
                  Ekaterina Shutova and
                  Mohammad Taher Pilehvar},
  title        = {S{\({^2}\)}R: Teaching LLMs to Self-verify and Self-correct via Reinforcement
                  Learning},
  booktitle    = {Proceedings of the 63rd Annual Meeting of the Association for Computational
                  Linguistics (Volume 1: Long Papers), {ACL} 2025, Vienna, Austria,
                  July 27 - August 1, 2025},
  pages        = {22632--22654},
  publisher    = {Association for Computational Linguistics},
  year         = {2025},
  url          = {https://doi.org/10.18653/v1/2025.acl-long.1104},
  doi          = {10.18653/V1/2025.ACL-LONG.1104},
  bibsource    = {dblp computer science bibliography, https://dblp.org}
}

@inproceedings{CTRL,
  author       = {Zhihui Xie and
                  Jie Chen and
                  Liyu Chen and
                  Weichao Mao and
                  Jingjing Xu and
                  Lingpeng Kong},
  editor       = {Aarti Singh and
                  Maryam Fazel and
                  Daniel Hsu and
                  Simon Lacoste{-}Julien and
                  Felix Berkenkamp and
                  Tegan Maharaj and
                  Kiri Wagstaff and
                  Jerry Zhu},
  title        = {Teaching Language Models to Critique via Reinforcement Learning},
  booktitle    = {Forty-second International Conference on Machine Learning, {ICML}
                  2025, Vancouver, BC, Canada, July 13-19, 2025},
  series       = {Proceedings of Machine Learning Research},
  volume       = {267},
  publisher    = {{PMLR} / OpenReview.net},
  year         = {2025},
  url          = {https://proceedings.mlr.press/v267/xie25a.html},
  bibsource    = {dblp computer science bibliography, https://dblp.org}
}

@inproceedings{StepSearch,
  author       = {Xuhui Zheng and
                  Kang An and
                  Ziliang Wang and
                  Yuhang Wang and
                  Yichao Wu},
  editor       = {Christos Christodoulopoulos and
                  Tanmoy Chakraborty and
                  Carolyn Rose and
                  Violet Peng},
  title        = {StepSearch: Igniting LLMs Search Ability via Step-Wise Proximal Policy
                  Optimization},
  booktitle    = {Proceedings of the 2025 Conference on Empirical Methods in Natural
                  Language Processing, {EMNLP} 2025, Suzhou, China, November 4-9, 2025},
  pages        = {21805--21830},
  publisher    = {Association for Computational Linguistics},
  year         = {2025},
  url          = {https://doi.org/10.18653/v1/2025.emnlp-main.1106},
  doi          = {10.18653/V1/2025.EMNLP-MAIN.1106},
  bibsource    = {dblp computer science bibliography, https://dblp.org}
}

@inproceedings{DeepPlanner,
  author       = {Wei Fan and
                  Wenlin Yao and
                  Zheng Li and
                  Feng Yao and
                  Xin Liu and
                  Liang Qiu and
                  Qingyu Yin and
                  Yangqiu Song and
                  Bing Yin},
  editor       = {Maria Liakata and
                  Viviane P. Moreira and
                  Jiajun Zhang and
                  David Jurgens},
  title        = {DeepPlanner: Scaling Planning Capability for Deep Research Agents
                  via Advantage Shaping},
  booktitle    = {Findings of the Association for Computational Linguistics, {ACL} 2026,
                  San Diego, California, United States, July 2-7, 2026},
  pages        = {7510--7525},
  publisher    = {Association for Computational Linguistics},
  year         = {2026},
  url          = {https://aclanthology.org/2026.findings-acl.370/},
  bibsource    = {dblp computer science bibliography, https://dblp.org}
}

@article{InfoFlow,
  author       = {Kun Luo and
                  Hongjin Qian and
                  Zheng Liu and
                  Ziyi Xia and
                  Shitao Xiao and
                  Siqi Bao and
                  Jun Zhao and
                  Kang Liu},
  title        = {InfoFlow: Reinforcing Search Agent Via Reward Density Optimization},
  journal      = {CoRR},
  volume       = {abs/2510.26575},
  year         = {2025},
  url          = {https://doi.org/10.48550/arXiv.2510.26575},
  doi          = {10.48550/ARXIV.2510.26575},
  eprinttype   = {arXiv},
  eprint       = {2510.26575},
  bibsource    = {dblp computer science bibliography, https://dblp.org}
}

@article{GiGPO,
  author       = {Lang Feng and
                  Zhenghai Xue and
                  Tingcong Liu and
                  Bo An},
  title        = {Group-in-Group Policy Optimization for {LLM} Agent Training},
  journal      = {CoRR},
  volume       = {abs/2505.10978},
  year         = {2025},
  url          = {https://doi.org/10.48550/arXiv.2505.10978},
  doi          = {10.48550/ARXIV.2505.10978},
  eprinttype   = {arXiv},
  eprint       = {2505.10978},
  bibsource    = {dblp computer science bibliography, https://dblp.org}
}

@inproceedings{CriticSearch,
  author       = {Yaocheng Zhang and
                  Haohuan Huang and
                  Zijun Song and
                  Zijie Zhao and
                  Qichao Zhang and
                  Yuanheng Zhu and
                  Dongbin Zhao},
  editor       = {Maria Liakata and
                  Viviane P. Moreira and
                  Jiajun Zhang and
                  David Jurgens},
  title        = {CriticSearch: Fine-Grained Credit Assignment for Search Agents via
                  a Retrospective Critic},
  booktitle    = {Findings of the Association for Computational Linguistics, {ACL} 2026,
                  San Diego, California, United States, July 2-7, 2026},
  pages        = {12272--12290},
  publisher    = {Association for Computational Linguistics},
  year         = {2026},
  url          = {https://aclanthology.org/2026.findings-acl.596/},
  bibsource    = {dblp computer science bibliography, https://dblp.org}
}

@article{T_3,
  author       = {Deyu Zou and
                  Yongqiang Chen and
                  Jianxiang Wang and
                  Haochen Yang and
                  Mufei Li and
                  James Cheng and
                  Pan Li and
                  Yu Gong},
  title        = {T\({}^{\mbox{3}}\): Reducing Belief Deviation in Reinforcement Learning
                  for Active Reasoning},
  journal      = {CoRR},
  volume       = {abs/2510.12264},
  year         = {2025},
  url          = {https://doi.org/10.48550/arXiv.2510.12264},
  doi          = {10.48550/ARXIV.2510.12264},
  eprinttype   = {arXiv},
  eprint       = {2510.12264},
  bibsource    = {dblp computer science bibliography, https://dblp.org}
}

@article{Information-Self-Locking,
  author       = {Deyu Zou and
                  Yongqiang Chen and
                  Fan Feng and
                  Mufei Li and
                  Pan Li and
                  Yu Gong and
                  James Cheng},
  title        = {On Information Self-Locking in Reinforcement Learning for Active Reasoning
                  of {LLM} agents},
  journal      = {CoRR},
  volume       = {abs/2603.12109},
  year         = {2026},
  url          = {https://doi.org/10.48550/arXiv.2603.12109},
  doi          = {10.48550/ARXIV.2603.12109},
  eprinttype   = {arXiv},
  eprint       = {2603.12109},
  bibsource    = {dblp computer science bibliography, https://dblp.org}
}

@inproceedings{evolvesearch,
  author       = {Dingchu Zhang and
                  Yida Zhao and
                  Jialong Wu and
                  Liwen Zhang and
                  Baixuan Li and
                  Wenbiao Yin and
                  Yong Jiang and
                  Yu{-}Feng Li and
                  Kewei Tu and
                  Pengjun Xie and
                  Fei Huang},
  editor       = {Christos Christodoulopoulos and
                  Tanmoy Chakraborty and
                  Carolyn Rose and
                  Violet Peng},
  title        = {EvolveSearch: An Iterative Self-Evolving Search Agent},
  booktitle    = {Proceedings of the 2025 Conference on Empirical Methods in Natural
                  Language Processing, {EMNLP} 2025, Suzhou, China, November 4-9, 2025},
  pages        = {13123--13136},
  publisher    = {Association for Computational Linguistics},
  year         = {2025},
  url          = {https://doi.org/10.18653/v1/2025.emnlp-main.663},
  doi          = {10.18653/V1/2025.EMNLP-MAIN.663},
  bibsource    = {dblp computer science bibliography, https://dblp.org}
}

@inproceedings{Self-Rewarding,
  author       = {Weizhe Yuan and
                  Richard Yuanzhe Pang and
                  Kyunghyun Cho and
                  Xian Li and
                  Sainbayar Sukhbaatar and
                  Jing Xu and
                  Jason Weston},
  editor       = {Ruslan Salakhutdinov and
                  Zico Kolter and
                  Katherine A. Heller and
                  Adrian Weller and
                  Nuria Oliver and
                  Jonathan Scarlett and
                  Felix Berkenkamp},
  title        = {Self-Rewarding Language Models},
  booktitle    = {Forty-first International Conference on Machine Learning, {ICML} 2024,
                  Vienna, Austria, July 21-27, 2024},
  series       = {Proceedings of Machine Learning Research},
  volume       = {235},
  pages        = {57905--57923},
  publisher    = {{PMLR} / OpenReview.net},
  year         = {2024},
  url          = {https://proceedings.mlr.press/v235/yuan24d.html},
  bibsource    = {dblp computer science bibliography, https://dblp.org}
}

@inproceedings{DBLP:conf/acl/YinWPL0W25,
  author       = {Xunjian Yin and
                  Xinyi Wang and
                  Liangming Pan and
                  Li Lin and
                  Xiaojun Wan and
                  William Yang Wang},
  editor       = {Wanxiang Che and
                  Joyce Nabende and
                  Ekaterina Shutova and
                  Mohammad Taher Pilehvar},
  title        = {G{\"{o}}del Agent: {A} Self-Referential Agent Framework for Recursively
                  Self-Improvement},
  booktitle    = {Proceedings of the 63rd Annual Meeting of the Association for Computational
                  Linguistics (Volume 1: Long Papers), {ACL} 2025, Vienna, Austria,
                  July 27 - August 1, 2025},
  pages        = {27890--27913},
  publisher    = {Association for Computational Linguistics},
  year         = {2025},
  url          = {https://doi.org/10.18653/v1/2025.acl-long.1354},
  doi          = {10.18653/V1/2025.ACL-LONG.1354},
  bibsource    = {dblp computer science bibliography, https://dblp.org}
}

@inproceedings{ADAS,
  author       = {Shengran Hu and
                  Cong Lu and
                  Jeff Clune},
  title        = {Automated Design of Agentic Systems},
  booktitle    = {The Thirteenth International Conference on Learning Representations,
                  {ICLR} 2025, Singapore, April 24-28, 2025},
  publisher    = {OpenReview.net},
  year         = {2025},
  url          = {https://openreview.net/forum?id=t9U3LW7JVX},
  bibsource    = {dblp computer science bibliography, https://dblp.org}
}

@article{evolm,
  author       = {Shuyue Stella Li and
                  Rui Xin and
                  Teng Xiao and
                  Yike Wang and
                  Rulin Shao and
                  Zoey Hao and
                  Melanie Sclar and
                  Sewoong Oh and
                  Faeze Brahman and
                  Pang Wei Koh and
                  Yulia Tsvetkov},
  title        = {EvoLM: Self-Evolving Language Models through Co-Evolved Discriminative
                  Rubrics},
  journal      = {CoRR},
  volume       = {abs/2605.03871},
  year         = {2026},
  url          = {https://doi.org/10.48550/arXiv.2605.03871},
  doi          = {10.48550/ARXIV.2605.03871},
  eprinttype   = {arXiv},
  eprint       = {2605.03871},
  bibsource    = {dblp computer science bibliography, https://dblp.org}
}

@article{reseek,
  author       = {Shiyu Li and
                  Yang Tang and
                  Yifan Wang and
                  Peiming Li and
                  Xi Chen},
  title        = {ReSeek: {A} Self-Correcting Framework for Search Agents with Instructive
                  Rewards},
  journal      = {CoRR},
  volume       = {abs/2510.00568},
  year         = {2025},
  url          = {https://doi.org/10.48550/arXiv.2510.00568},
  doi          = {10.48550/ARXIV.2510.00568},
  eprinttype   = {arXiv},
  eprint       = {2510.00568},
  bibsource    = {dblp computer science bibliography, https://dblp.org}
}

@inproceedings{
he2026webseer,
title={WebSeer: Training Deeper Search Agents through Reinforcement Learning with Self-Reflection},
author={Guanzhong He and Zhen Yang and Jinxin Liu and Bin Xu and Lei Hou and Juanzi Li},
booktitle={The Fourteenth International Conference on Learning Representations},
year={2026},
url={https://openreview.net/forum?id=YCXWIfVakj}
}

@inproceedings{ReAct,
  author       = {Shunyu Yao and
                  Jeffrey Zhao and
                  Dian Yu and
                  Nan Du and
                  Izhak Shafran and
                  Karthik R. Narasimhan and
                  Yuan Cao},
  title        = {ReAct: Synergizing Reasoning and Acting in Language Models},
  booktitle    = {The Eleventh International Conference on Learning Representations,
                  {ICLR} 2023, Kigali, Rwanda, May 1-5, 2023},
  publisher    = {OpenReview.net},
  year         = {2023},
  url          = {https://openreview.net/forum?id=WE\_vluYUL-X},
  bibsource    = {dblp computer science bibliography, https://dblp.org}
}

@inproceedings{
yao2024retroformer,
title={Retroformer: Retrospective Large Language Agents with Policy Gradient Optimization},
author={Weiran Yao and Shelby Heinecke and Juan Carlos Niebles and Zhiwei Liu and Yihao Feng and Le Xue and Rithesh R N and Zeyuan Chen and Jianguo Zhang and Devansh Arpit and Ran Xu and Phil L Mui and Huan Wang and Caiming Xiong and Silvio Savarese},
booktitle={The Twelfth International Conference on Learning Representations},
year={2024},
url={https://openreview.net/forum?id=KOZu91CzbK}
}

@inproceedings{
zhang2025aflow,
title={{AF}low: Automating Agentic Workflow Generation},
author={Jiayi Zhang and Jinyu Xiang and Zhaoyang Yu and Fengwei Teng and Xiong-Hui Chen and Jiaqi Chen and Mingchen Zhuge and Xin Cheng and Sirui Hong and Jinlin Wang and Bingnan Zheng and Bang Liu and Yuyu Luo and Chenglin Wu},
booktitle={The Thirteenth International Conference on Learning Representations},
year={2025},
url={https://openreview.net/forum?id=z5uVAKwmjf}
}

@inproceedings{DBLP:conf/emnlp/LiDJZZZZD25,
  author       = {Xiaoxi Li and
                  Guanting Dong and
                  Jiajie Jin and
                  Yuyao Zhang and
                  Yujia Zhou and
                  Yutao Zhu and
                  Peitian Zhang and
                  Zhicheng Dou},
  editor       = {Christos Christodoulopoulos and
                  Tanmoy Chakraborty and
                  Carolyn Rose and
                  Violet Peng},
  title        = {Search-o1: Agentic Search-Enhanced Large Reasoning Models},
  booktitle    = {Proceedings of the 2025 Conference on Empirical Methods in Natural
                  Language Processing, {EMNLP} 2025, Suzhou, China, November 4-9, 2025},
  pages        = {5420--5438},
  publisher    = {Association for Computational Linguistics},
  year         = {2025},
  url          = {https://doi.org/10.18653/v1/2025.emnlp-main.276},
  doi          = {10.18653/V1/2025.EMNLP-MAIN.276},
  bibsource    = {dblp computer science bibliography, https://dblp.org}
}

@inproceedings{DBLP:conf/emnlp/ZhengFHCYLL25,
  author       = {Yuxiang Zheng and
                  Dayuan Fu and
                  Xiangkun Hu and
                  Xiaojie Cai and
                  Lyumanshan Ye and
                  Pengrui Lu and
                  Pengfei Liu},
  editor       = {Christos Christodoulopoulos and
                  Tanmoy Chakraborty and
                  Carolyn Rose and
                  Violet Peng},
  title        = {DeepResearcher: Scaling Deep Research via Reinforcement Learning in
                  Real-world Environments},
  booktitle    = {Proceedings of the 2025 Conference on Empirical Methods in Natural
                  Language Processing, {EMNLP} 2025, Suzhou, China, November 4-9, 2025},
  pages        = {414--431},
  publisher    = {Association for Computational Linguistics},
  year         = {2025},
  url          = {https://doi.org/10.18653/v1/2025.emnlp-main.22},
  doi          = {10.18653/V1/2025.EMNLP-MAIN.22},
  bibsource    = {dblp computer science bibliography, https://dblp.org}
}

@inproceedings{trivedi2023interleaving,
  title={Interleaving retrieval with chain-of-thought reasoning for knowledge-intensive multi-step questions},
  author={Trivedi, Harsh and Balasubramanian, Niranjan and Khot, Tushar and Sabharwal, Ashish},
  booktitle={Proceedings of the 61st annual meeting of the association for computational linguistics (volume 1: long papers)},
  pages={10014--10037},
  year={2023}
}

@inproceedings{jiang2023active,
  title={Active retrieval augmented generation},
  author={Jiang, Zhengbao and Xu, Frank F and Gao, Luyu and Sun, Zhiqing and Liu, Qian and Dwivedi-Yu, Jane and Yang, Yiming and Callan, Jamie and Neubig, Graham},
  booktitle={Proceedings of the 2023 conference on empirical methods in natural language processing},
  pages={7969--7992},
  year={2023}
}

@article{xi2025survey,
  title={A survey of llm-based deep search agents: Paradigm, optimization, evaluation, and challenges},
  author={Xi, Yunjia and Lin, Jianghao and Xiao, Yongzhao and Zhou, Zheli and Shan, Rong and Gao, Te and Zhu, Jiachen and Liu, Weiwen and Yu, Yong and Zhang, Weinan},
  journal={arXiv preprint arXiv:2508.05668},
  year={2025}
}

@article{singh2026openaigpt5card,
  title={Openai gpt-5 system card},
  author={Singh, Aaditya and Fry, Adam and Perelman, Adam and Tart, Adam and Ganesh, Adi and El-Kishky, Ahmed and McLaughlin, Aidan and Low, Aiden and Ostrow, AJ and Ananthram, Akhila and others},
  journal={arXiv preprint arXiv:2601.03267},
  year={2025}
}

@article{comanici2025gemini25pushingfrontier,
  title={Gemini 2.5: Pushing the frontier with advanced reasoning, multimodality, long context, and next generation agentic capabilities},
  author={Comanici, Gheorghe and Bieber, Eric and Schaekermann, Mike and Pasupat, Ice and Sachdeva, Noveen and Dhillon, Inderjit and Blistein, Marcel and Ram, Ori and Zhang, Dan and Rosen, Evan and others},
  journal={arXiv preprint arXiv:2507.06261},
  year={2025}
}

@article{kimiteam2026kimik2openagentic,
  title={Kimi k2: Open agentic intelligence},
  author={Team, Kimi and Bai, Yifan and Bao, Yiping and Charles, Y and Chen, Cheng and Chen, Guanduo and Chen, Haiting and Chen, Huarong and Chen, Jiahao and Chen, Ningxin and others},
  journal={arXiv preprint arXiv:2507.20534},
  year={2025}
}

@article{5team2025glm45agenticreasoningcoding,
  title={Glm-4.5: Agentic, reasoning, and coding (arc) foundation models},
  author={Zeng, Aohan and Lv, Xin and Zheng, Qinkai and Hou, Zhenyu and Chen, Bin and Xie, Chengxing and Wang, Cunxiang and Yin, Da and Zeng, Hao and Zhang, Jiajie and others},
  journal={arXiv preprint arXiv:2508.06471},
  year={2025}
}

@misc{qwen2025qwen25technicalreport,
      title={Qwen2.5 Technical Report}, 
      author={Qwen and : and An Yang and Baosong Yang and Beichen Zhang and Binyuan Hui and Bo Zheng and Bowen Yu and Chengyuan Li and Dayiheng Liu and Fei Huang and Haoran Wei and Huan Lin and Jian Yang and Jianhong Tu and Jianwei Zhang and Jianxin Yang and Jiaxi Yang and Jingren Zhou and Junyang Lin and Kai Dang and Keming Lu and Keqin Bao and Kexin Yang and Le Yu and Mei Li and Mingfeng Xue and Pei Zhang and Qin Zhu and Rui Men and Runji Lin and Tianhao Li and Tianyi Tang and Tingyu Xia and Xingzhang Ren and Xuancheng Ren and Yang Fan and Yang Su and Yichang Zhang and Yu Wan and Yuqiong Liu and Zeyu Cui and Zhenru Zhang and Zihan Qiu},
      year={2025},
      eprint={2412.15115},
      archivePrefix={arXiv},
      primaryClass={cs.CL},
      url={https://arxiv.org/abs/2412.15115}, 
}

@article{deepseekai2026deepseekv4highlyefficientmilliontoken,
  title={Deepseek-v4: Towards highly efficient million-token context intelligence},
  author={Xu, Anyi and Lin, Bangcai and Xue, Bing and Wang, Bingxuan and Xu, Bingzheng and Wu, Bochao and Zhang, Bowei and Lin, Chaofan and Dong, Chen and Ling, Chenchen and others},
  journal={arXiv preprint arXiv:2606.19348},
  year={2026}
}

@techreport{anthropic2025claudehaiku45systemcard,
  author      = {{Anthropic}},
  title       = {Claude Haiku 4.5 System Card},
  institution = {Anthropic},
  year        = {2025},
  month       = oct,
  url         = {https://assets.anthropic.com/m/99128ddd009bdcb/Claude-Haiku-4-5-System-Card.pdf}
}

@misc{shao2024deepseekmathpushinglimitsmathematical,
      title={DeepSeekMath: Pushing the Limits of Mathematical Reasoning in Open Language Models}, 
      author={Zhihong Shao and Peiyi Wang and Qihao Zhu and Runxin Xu and Junxiao Song and Xiao Bi and Haowei Zhang and Mingchuan Zhang and Y. K. Li and Y. Wu and Daya Guo},
      year={2024},
      eprint={2402.03300},
      archivePrefix={arXiv},
      primaryClass={cs.CL},
      url={https://arxiv.org/abs/2402.03300}, 
}

@inproceedings{flashrag,
  author       = {Jiajie Jin and
                  Yutao Zhu and
                  Zhicheng Dou and
                  Guanting Dong and
                  Xinyu Yang and
                  Chenghao Zhang and
                  Tong Zhao and
                  Zhao Yang and
                  Ji{-}Rong Wen},
  editor       = {Guodong Long and
                  Michale Blumestein and
                  Yi Chang and
                  Liane Lewin{-}Eytan and
                  Zi Helen Huang and
                  Elad Yom{-}Tov},
  title        = {FlashRAG: {A} Modular Toolkit for Efficient Retrieval-Augmented Generation
                  Research},
  booktitle    = {Companion Proceedings of the {ACM} on Web Conference 2025, {WWW} 2025,
                  Sydney, NSW, Australia, 28 April 2025 - 2 May 2025},
  pages        = {737--740},
  publisher    = {{ACM}},
  year         = {2025},
  url          = {https://doi.org/10.1145/3701716.3715313},
  doi          = {10.1145/3701716.3715313},
  bibsource    = {dblp computer science bibliography, https://dblp.org}
}

@misc{wang2026reasoningfailsplanplanningcentric,
      title={Why Reasoning Fails to Plan: A Planning-Centric Analysis of Long-Horizon Decision Making in LLM Agents}, 
      author={Zehong Wang and Fang Wu and Hongru Wang and Xiangru Tang and Bolian Li and Zhenfei Yin and Yijun Ma and Yiyang Li and Weixiang Sun and Xiusi Chen and Yanfang Ye},
      year={2026},
      eprint={2601.22311},
      archivePrefix={arXiv},
      primaryClass={cs.AI},
      url={https://arxiv.org/abs/2601.22311}, 
}

@inproceedings{ssr-zero,
    title = "{SSR}-Zero: Simple Self-Rewarding Reinforcement Learning for Machine Translation",
    author = "Yang, Wenjie  and
      Zheng, Mao  and
      Song, Mingyang  and
      Li, Zheng  and
      Wang, Sitong",
    editor = "Liakata, Maria  and
      Moreira, Viviane P.  and
      Zhang, Jiajun  and
      Jurgens, David",
    booktitle = "Findings of the {A}ssociation for {C}omputational {L}inguistics: {ACL} 2026",
    month = jul,
    year = "2026",
    address = "San Diego, California, United States",
    publisher = "Association for Computational Linguistics",
    url = "https://aclanthology.org/2026.findings-acl.300/",
    doi = "10.18653/v1/2026.findings-acl.300",
    pages = "6039--6052",
    ISBN = "979-8-89176-395-1"
}

@article{kimiteam2026kimik25visualagentic,
  title={Kimi k2. 5: Visual agentic intelligence},
  author={Team, Kimi and Bai, Tongtong and Bai, Yifan and Bao, Yiping and Cai, SH and Cao, Yuan and Charles, Y and Che, HS and Chen, Cheng and Chen, Guanduo and others},
  journal={arXiv preprint arXiv:2602.02276},
  year={2026}
}

@article{wang2026long,
  title={The long-horizon task mirage? diagnosing where and why agentic systems break},
  author={Wang, Xinyu Jessica and Bai, Haoyue and Sun, Yiyou and Wang, Haorui and Zhang, Shuibai and Hu, Wenjie and Schroder, Mya and Mutlu, Bilge and Song, Dawn and Nowak, Robert D},
  journal={arXiv preprint arXiv:2604.11978},
  year={2026}
}

@article{qi2026trajdebug,
  title={TRAJDEBUG: Tracing Error Lifecycle to Identify Critical Failures in Long-Horizon Agent Trajectories},
  author={Qi, Yunjia and Yin, Zehua and Shi, Xintong and Peng, Hao and Lu, Songyuanyi and Liu, Yixian and Xuan, Richeng and Liu, Yuhong and Hu, Zhichao and Wang, Xiaozhi and others},
  journal={arXiv preprint arXiv:2608.06346},
  year={2026}
}

@inproceedings{an2026erase,
  title={Erase to Improve: Erasable Reinforcement Learning for Search-Augmented LLMs},
  author={An, Kang and Wang, Ziliang and Zheng, Xuhui and Qian, Faqiang and Zhang, Weikun and Wang, Yuhang and Yichao, Wu},
  booktitle={International Conference on Learning Representations},
  volume={2026},
  pages={98392--98419},
  year={2026}
}

@inproceedings{wong2026widesearch,
  title={Widesearch: Benchmarking agentic broad info-seeking},
  author={Wong, Ryan and Wang, Jiawei and Chen, Li and Gao, Yan and Zhou, Xuan and Wang, Zuo and Xiang, Kai and Zhang, Ge and Huang, Wenhao and Wang, Yang and others},
  booktitle={International Conference on Learning Representations},
  volume={2026},
  pages={10012--10086},
  year={2026}
}

@techreport{wandr2026,
  title        = {{WANDR}: A Benchmark for Wide and Deep Research},
  author       = {Polshkov, Vitaliy and Pitera, Marcin and Yang, Jeremy and
                  Priemko, Kirill and Gaiduk, Maksim and Nikolenko, Aleksandr and
                  Bykov, Denis and Yarats, Denis and Southern, Clare and Ma, Jerry},
  institution  = {Perplexity AI},
  year         = {2026},
  month        = jul,
  url          = {https://research.perplexity.ai/articles/wandr-benchmark-evaluating-research-agents-that-must-search-wide-and-deep},
  note         = {Technical report. Code and tasks: \url{https://github.com/perplexityai/wandr}}
}

@book{sutton1998reinforcement,
  title={Reinforcement learning: An introduction},
  author={Sutton, Richard S and Barto, Andrew G and Barto, Andrew},
  volume={1},
  number={1},
  year={1998},
  publisher={MIT press Cambridge}
}

@article{uesato2022solving,
  title={Solving math word problems with process-and outcome-based feedback},
  author={Uesato, Jonathan and Kushman, Nate and Kumar, Ramana and Song, Francis and Siegel, Noah and Wang, Lisa and Creswell, Antonia and Irving, Geoffrey and Higgins, Irina},
  journal={arXiv preprint arXiv:2211.14275},
  year={2022}
}

@article{ackermann2025off,
  title={Off-policy corrected reward modeling for reinforcement learning from human feedback},
  author={Ackermann, Johannes and Ishida, Takashi and Sugiyama, Masashi},
  journal={arXiv preprint arXiv:2507.15507},
  year={2025}
}

@inproceedings{li-etal-2026-stale,
    title = "No More Stale Feedback: Co-Evolving Critics for Open-World Agent Learning",
    author = "Li, Zhicong  and
      Jiang, Lingjie  and
      Hu, Yulan  and
      Zeng, Xingchen  and
      Li, Yixia  and
      Zhang, Xiangwen  and
      Chen, Guanhua  and
      Pan, Zheng  and
      Li, Xin  and
      Liu, Yong",
    editor = "Liakata, Maria  and
      Moreira, Viviane P.  and
      Zhang, Jiajun  and
      Jurgens, David",
    booktitle = "Proceedings of the 64th Annual Meeting of the {A}ssociation for {C}omputational {L}inguistics (Volume 1: Long Papers)",
    month = jul,
    year = "2026",
    address = "San Diego, California, United States",
    publisher = "Association for Computational Linguistics",
    url = "https://aclanthology.org/2026.acl-long.576/",
    doi = "10.18653/v1/2026.acl-long.576",
    pages = "12643--12660",
    ISBN = "979-8-89176-390-6"
}

@misc{chen2025browsecompplusfairtransparentevaluation,
      title={BrowseComp-Plus: A More Fair and Transparent Evaluation Benchmark of Deep-Research Agent}, 
      author={Zijian Chen and Xueguang Ma and Shengyao Zhuang and Ping Nie and Kai Zou and Andrew Liu and Joshua Green and Kshama Patel and Ruoxi Meng and Mingyi Su and Sahel Sharifymoghaddam and Yanxi Li and Haoran Hong and Xinyu Shi and Xuye Liu and Nandan Thakur and Crystina Zhang and Luyu Gao and Wenhu Chen and Jimmy Lin},
      year={2025},
      eprint={2508.06600},
      archivePrefix={arXiv},
      primaryClass={cs.CL},
      url={https://arxiv.org/abs/2508.06600}, 
}

@misc{wei2025browsecomp,
      title={BrowseComp: A Simple Yet Challenging Benchmark for Browsing Agents}, 
      author={Jason Wei and Zhiqing Sun and Spencer Papay and Scott McKinney and Jeffrey Han and Isa Fulford and Hyung Won Chung and Alex Tachard Passos and William Fedus and Amelia Glaese},
      year={2025},
      eprint={2504.12516},
      archivePrefix={arXiv},
      primaryClass={cs.CL},
      url={https://arxiv.org/abs/2504.12516}, 
}
\bibliographystyle{iclr2026_conference}

\clearpage
\appendix
\section{Implementation Details}
\label{sec:implementation_detail}
\subsection{Dataset Details}
\label{app:dataset_details}
\paragraph{Training data.}
For SFT, we construct a feedback-augmented bootstrap dataset of approximately 17k trajectories using the procedure described in \Cref{sec:shared_feedback_search}. For RL, we apply a two-stage selection pipeline to a large prompt pool, yielding approximately 13k training prompts. We first sample eight rollouts per prompt through rejection sampling to form a candidate set. We then retain prompts for which at least one rollout requests feedback and reaches the correct answer, while at least one no-feedback rollout is incorrect. We regard these as high-learning-value examples near the current policy's capability boundary, as the policy succeeds along a feedback-assisted route but fails along a no-feedback route for the same prompt.
\paragraph{Evaluation data.}
We follow the evaluation protocol of R-Search~\citep{R-search}. For the larger multi-hop benchmarks 2WikiMultihopQA, HotpotQA, and MuSiQue, we use the test splits released by \citet{trivedi2023interleaving} and evaluate on 500 examples per dataset. For Bamboogle, a smaller multi-hop benchmark, we use all 125 test examples provided through FlashRAG~\citep{flashrag}. For the single-hop factoid benchmarks Natural Questions, PopQA, and TriviaQA, we use the corresponding FlashRAG test sets and randomly sample 500 examples from each dataset. All methods use the same E5 retriever over a fixed local corpus. Each search trajectory is allowed at most 30 tool calls.

\subsection{Training Details}
\label{app:training_details}
\paragraph{Backbone and hardware.}
We use Qwen2.5-7B-Instruct~\citep{qwen2025qwen25technicalreport} as the shared backbone for both the agent and critic roles. All training is conducted on 8 NVIDIA H20 GPUs, and the complete online RL stage takes approximately two days.
\paragraph{Online reinforcement learning.}
For each training prompt, we sample \(n_{\mathrm{rollout}}{=}8\) trajectories, which supply the call and skip groups for the success gap in \Cref{eq:observed_call_gap} and the pool from which preference pairs are mined. We use a batch size of 128, a learning rate of \(1{\times}10^{-6}\), and a KL-loss coefficient of \(0.001\). For CFE, we set the feedback scale to \(\beta=0.5\) and the repeated-request penalty to \(\gamma=0.05\) in \Cref{eq:cfe_reward}. The request budget is one, so the penalty applies only from the second request onward. For token-level advantage shaping in \Cref{eq:feedback_advantage_shaping}, we set \(\lambda=0.5\), clip the prompt-level gap at \(b=0.5\), and use a pre-feedback advantage floor of \(0\). Detailed prompts are provided in \Cref{prompt_template}.
\paragraph{Offline RDPO and iterative schedule.}
Each RDPO update uses a learning rate of \(5{\times}10^{-7}\), a preference temperature of \(\beta{=}0.1\), and runs for 1 epoch. Because RDPO updates the same shared parameters as the online agent, this conservative setting prevents offline preference optimization from overriding the task-solving behavior learned through RL. By default, we alternate 100 online RL steps with one RDPO update for five iterations (\(100{\times}5\)), yielding 500 online RL steps in total. This schedule keeps feedback optimization aligned with the evolving policy while maintaining stable training. The number of retained preference pairs varies across iterations due to filtering; we cap each RDPO update at 2000 pairs.
\subsection{CFE Fallback and Resolved Gap}
\label{app:cfe_fallback}
Let \(\mathcal{B}_{\mathrm{call}}=\{j\in\mathcal{B}\mid C_j=1\}\) and \(\mathcal{B}_{\mathrm{skip}}=\{k\in\mathcal{B}\mid C_k=0\}\) denote the two route groups in the current rollout batch \(\mathcal{B}\). When both groups are nonempty, the fallback estimate is
\begin{equation}
    \widehat{u}_{\mathrm{batch}}
    =
    \frac{1}{|\mathcal{B}_{\mathrm{call}}|}
    \sum_{j\in\mathcal{B}_{\mathrm{call}}} r_j
    -
    \frac{1}{|\mathcal{B}_{\mathrm{skip}}|}
    \sum_{k\in\mathcal{B}_{\mathrm{skip}}} r_k .
    \label{eq:observed_batch_gap}
\end{equation}
When the prompt group of rollout \(i\) realizes a single route, so that \(\widehat{u}(x_i)\) is unavailable, the resolved estimate falls back to
\begin{equation}
    u_i =
    \begin{cases}
        \widehat{u}_{\mathrm{batch}},
        & |\mathcal{B}_{\mathrm{call}}|>0 \ \land\
          |\mathcal{B}_{\mathrm{skip}}|>0, \\[1mm]
        0, & \text{otherwise}.
    \end{cases}
    \label{eq:resolved_feedback_gap}
\end{equation}

\subsection{Offline Preference Filtering and Pairing}
\label{app:offline_pairing}
Within each prompt bucket, we construct candidate preference pairs from called-correct and called-incorrect rollouts. For each rollout, we extract the trajectory prefix through its first closed feedback request. After removing markup and lowercasing the text, we represent each prefix \(h\) by its set of alphanumeric tokens \(T(h)\) and compute token-set Jaccard similarity:
\begin{equation}
    \operatorname{sim}(h^{+},h^{-})
    =
    \frac{|T(h^{+})\cap T(h^{-})|}
    {|T(h^{+})\cup T(h^{-})|}.
    \label{eq:offline_prefix_similarity}
\end{equation}
We retain pairs with \(\operatorname{sim}(h^{+},h^{-})\geq\tau_{\mathrm{sim}}\), using \(\tau_{\mathrm{sim}}=0.7\) in all experiments. An LLM judge (GPT-5.1) then performs a second-stage quality check and removes invalid or semantically mismatched pairs. The remaining feedback pairs are used for the offline RDPO update.

\section{Prompt Template}
\label{prompt_template}
\begin{tcolorbox}[
  enhanced,
  breakable,
  sharp corners,
  colframe=Periwinkle,
  colback=Periwinkle!5,
  colbacktitle=Periwinkle!45,
  coltitle=black,
  fonttitle=\small\bfseries,
  title={Search-Agent Prompt},
  boxrule=3pt,
  shadow={3pt}{-3pt}{0pt}{opacity=1},
  toptitle=5pt,
  bottomtitle=5pt,
  boxsep=0pt,
  left=5pt,
  right=5pt,
  top=4pt,
  bottom=4pt
]\label{box:prompt1}
\begin{lstlisting}[style=cafePrompt]
## Background Information

* You are Deep Research AI Assistant, an expert in conducting thorough, multi-step research.

The question I give you is a complex question that requires a deep research to answer.

To help you perform this task, you are equipped with one tool:
- A web search tool to help you perform search for relevant information based on the given query.


Besides, you have a hidden environment feedback that can critique your current plan or execution when you explicitly request it with <request_feedback></request_feedback>.
## Your Task
Do not answer the question immediately.
In the first step, you must output your plan inside <plan></plan> tags.
In later steps, you can use <tool_call></tool_call> to call tools or <answer></answer> to provide your final answer.
When you detect that your reasoning or search process is getting stuck, becoming repetitive, failing to find useful evidence, or leaving you with low confidence about the next step, you may output <request_feedback></request_feedback> to request guidance from the feedback tool before continuing.
Even if the question appears simple, you should proactively use the feedback tool in your reasoning process whenever it can help verify your current reasoning and reduce the risk of an incorrect answer.
You can also re-evaluate and update your plan during the later steps.

## Output Format
You must strictly follow one and only one of the four output formats below at each step:

<think>
Your thinking process here.
</think>
<plan>
Step-by-step research plan or re-plan. Each step should be concise and action-oriented.
</plan>

or

<think>
Your thinking process here.
</think>
<tool_call>
Tool call with correct format.
</tool_call>

or

<think>
Your thinking process here.
</think>
<request_feedback>
</request_feedback>

or

<think>
Your thinking process here.
</think>
<answer>
Final answer only : a word, phrase, or number.
If it's a yes-or-no question, respond with only "yes" or "no"
No explanations or additional commentary.
</answer>

You may call one or more functions to assist with the user query.
You are provided with function signatures within <tools></tools> XML tags:
<tools>
{"type": "function", "function": {"name": "search", "description": "Search the web for
relevant information. You should use this tool if the historical search content
is not enough to answer the question. Or last search result is not relevant to the
question.", "parameters": {"type": "object", "properties": {"query": {"type": "array", "
description": "The queries to search"} }, "required": ["query"]} } }
</tools>
For each function call, return a json object with function name and arguments within <
tool_call></tool_call> XML tags:
<tool_call>
{"name": <function-name>, "arguments": <args-json-object>}
</tool_call>
\end{lstlisting}
\end{tcolorbox}

\begin{tcolorbox}[
  enhanced,
  breakable,
  sharp corners,
  colframe=Periwinkle,
  colback=Periwinkle!5,
  colbacktitle=Periwinkle!45,
  coltitle=black,
  fonttitle=\small\bfseries,
  title={Feedback Prompt},
  boxrule=3pt,
  shadow={3pt}{-3pt}{0pt}{opacity=1},
  toptitle=5pt,
  bottomtitle=5pt,
  boxsep=0pt,
  left=5pt,
  right=5pt,
  top=4pt,
  bottom=4pt
]\label{box:prompt2}
\begin{lstlisting}[style=cafePrompt]
You are a trajectory critic for a Deep Research agent.

Your task is to read the user's original query and the agent's current trajectory, then produce concise, actionable, evidence-grounded feedback that improves the agent's next step.

Requirements:
- Preserve the original task objective.
- Focus on the most important correction for the next plan or next tool call.
- Use only the information available in the provided trajectory.
- Do not solve the task.
- Do not reveal the final answer.
- Do not invent evidence that is not present in the trajectory.
- Keep the feedback brief, specific, and directly usable.
Please review the following task information:
<query> {query} </query>
<trajectory> {trajectory} </trajectory>
Evaluate how effectively the Plan addresses the Query, taking into account the real-world feedback from the Tool Execution Trajectory.

Provide constructive, overall feedback that identifies any flaws in the logic or execution and suggests how the plan can be improved.

Output only a single XML block in the exact format below:
<feedback>
Your constructive feedback text here
</feedback>
\end{lstlisting}
\end{tcolorbox}
\section{Algorithm Analysis}
\label{app:algorithm_analysis}

\subsection{CAFE Training Procedure}
\Cref{alg:cafe_training} summarizes the complete CAFE pipeline. We first
initialize the shared agent--critic model with feedback-augmented SFT data, then
alternate online agent optimization with rollout-derived offline feedback
optimization.

\begin{algorithm}[H]
\caption{CAFE training procedure}
\label{alg:cafe_training}
\footnotesize
\begin{algorithmic}[1]
\Require Base model \(\theta_{\mathrm{base}}\), prompt pool \(\mathcal D\),
teacher \(M_T\), schedule \((K,H,n,E_{\mathrm{DPO}})\)
\Ensure Co-evolved shared model \(\theta_K\)
\Statex \textbf{Stage I: Feedback-augmented SFT initialization}
\State Collect base-agent failures \(\mathcal F\) from \(\mathcal D\);
\(\mathcal D_{\mathrm{SFT}}\gets\emptyset\)
\ForAll{\(\tau\in\mathcal F\)}
    \State \(t^\star\gets\Call{LocateFirstError}{M_T,\tau}\)
    \State Preserve \(\tau_{\leq t^\star}\) and insert
    \texttt{<request\_feedback>}
    \State Let \(M_T\) generate feedback and complete the repaired trajectory
    \(\tau^+\)
    \If{\(\tau^+\) reaches the correct final answer}
        \State \(\mathcal D_{\mathrm{SFT}}\gets
        \mathcal D_{\mathrm{SFT}}\cup\{\tau^+\}\)
    \EndIf
\EndFor
\State \(\theta_0\gets\Call{SFT}{\theta_{\mathrm{base}},
\mathcal D_{\mathrm{SFT}}}\)
\Statex \textbf{Stage II: Alternating online and offline optimization}
\For{\(k=0,\ldots,K-1\)}
    \State \(\theta\gets\theta_k\), \(\mathcal R_k\gets\emptyset\)
    \For{\(h=1,\ldots,H\)}
        \State Sample rollout batch \(\mathcal B_h\) with \(n\) trajectories
        per prompt and shared agent/critic role switching
        \State \(\mathcal R_k\gets\mathcal R_k\cup\mathcal B_h\)
        \State Compute CFE returns \(R_i^{\mathrm{CFE}}\) and shaped token
        advantages \(\widetilde A_{i,t}\) on \(\mathcal B_h\)
        \State \(\theta\gets\Call{GRPOUpdate}{\theta,\mathcal B_h}\)
    \EndFor
    \State \(\theta_{k+\frac12}\gets\theta\)
    \State \(\mathcal D_k^{\mathrm{fb}}\gets
    \Call{FilterAndPair}{\mathcal R_k}\)
    \Comment{matched successful/failed feedback}
    \State \(\theta_{k+1}\gets
    \Call{RDPO}{\theta_{k+\frac12},\mathcal D_k^{\mathrm{fb}},
    E_{\mathrm{DPO}}}\)
\EndFor
\State \Return \(\theta_K\)
\end{algorithmic}
\end{algorithm}

\subsection{Preservation of the Task-Update Direction}
\label{app:cafe_update_alignment}

The online CAFE update adds CFE and feedback-aware advantage shaping to the
outcome-only GRPO update. We show that these terms do not reverse the original
task-update direction when their induced perturbation is smaller than the
baseline update norm. This ensures that learning when and how to use feedback
does not optimize against task success.

For a fixed policy, let
\[
\mathcal N(R)_i=\frac{R_i-\overline R}{\widehat\sigma(R)+\epsilon},
\quad
q_i=\beta C_i u_i-\gamma[n_{\mathrm{fb},i}-1]_+,
\quad
A_i^0=\mathcal N(r)_i,\quad A_i=\mathcal N(r+q)_i,
\]
where \(\widehat\sigma\) is the population standard deviation within the
rollout group, and let \(\widetilde A_{i,t}\) be the final shaped token
advantage. For \(s_{i,t}=\nabla_\theta\log\pi_\theta(a_{i,t}\mid h_{i,t})\)
and common token weights \(w_{i,t}\geq0\), define the outcome-only task update
and the online CAFE update as
\[
G_{\mathrm{task}}=\mathbb E\!\left[\sum_{i,t}w_{i,t}s_{i,t}A_i^0\right],
\qquad
G_{\mathrm{CAFE}}=\mathbb E\!\left[\sum_{i,t}w_{i,t}s_{i,t}\widetilde A_{i,t}\right].
\]
Assume \(r_i\in[0,1]\), \(|u_i|\leq U\),
\([n_{\mathrm{fb},i}-1]_+\leq K\), \(0\leq g_i\leq b\), and that both
normalization denominators are at least \(\nu>0\). Reward statistics and
shaping coefficients are treated as stop-gradient quantities. We further assume
\(B_\pi:=\sup_{\mathcal G}\sum_{i,t}w_{i,t}\|s_{i,t}\|<\infty\), where
\(\mathcal G\) denotes a rollout group.

\begin{theorem}[Task-update direction preservation]
\label{thm:cafe_update_alignment}
Let \(\eta_R=\beta U+\gamma K\) and
\(c_{\mathrm{norm}}=2/\nu+1/\nu^2\), and define
\[
\Delta_{\mathrm{CAFE}}
=B_\pi\!\left[c_{\mathrm{norm}}\eta_R+\lambda b\right].
\]
If \(\Delta_{\mathrm{CAFE}}<\|G_{\mathrm{task}}\|\), then
\[
\left\langle G_{\mathrm{CAFE}},G_{\mathrm{task}}\right\rangle
\geq
\|G_{\mathrm{task}}\|
\left(\|G_{\mathrm{task}}\|-\Delta_{\mathrm{CAFE}}\right)>0.
\]
Thus, the online CAFE update remains positively aligned with the original
outcome-only task update.
\end{theorem}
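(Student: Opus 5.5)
The plan is a perturbation argument. Write $G_{\mathrm{CAFE}}=G_{\mathrm{task}}+D$ with
\[
D=\mathbb E\!\left[\sum_{i,t}w_{i,t}s_{i,t}\bigl(\widetilde A_{i,t}-A_i^0\bigr)\right].
\]
By Cauchy--Schwarz,
\[
\langle G_{\mathrm{CAFE}},G_{\mathrm{task}}\rangle\ge\|G_{\mathrm{task}}\|^2-\|D\|\,\|G_{\mathrm{task}}\|,
\]
so it suffices to show $\|D\|\le\Delta_{\mathrm{CAFE}}$. Under the hypothesis $\Delta_{\mathrm{CAFE}}<\|G_{\mathrm{task}}\|$, the stated lower bound then follows and is strictly positive.

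To bound $\|D\|$, split the token-level deviation as
\[
\widetilde A_{i,t}-A_i^0=(\widetilde A_{i,t}-A_i)+(A_i-A_i^0).
\]
Reward statistics and shaping coefficients are stop-gradient, so each of these differences is a scalar multiplying $s_{i,t}$. Hence
\[
\|D\|\le \mathbb E\!\left[\sum_{i,t}w_{i,t}\|s_{i,t}\|\,\bigl(|\widetilde A_{i,t}-A_i|+|A_i-A_i^0|\bigr)\right].
\]
If I show both scalar deviations are uniformly bounded, by $\lambda b$ and $c_{\mathrm{norm}}\eta_R$ respectively, the definition of $B_\pi$ gives $\|D\|\le B_\pi(c_{\mathrm{norm}}\eta_R+\lambda b)$.

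\textbf{Shaping term.} By \Cref{eq:feedback_advantage_shaping}, the shaping is either inactive, with deviation $0$, or changes $A_i$ by $+\lambda g_i$ on post tokens and by $0$ on call tokens. On pre tokens the result is $\max(A_i-\lambda g_i,0)$ with $A_i>0$, whose deviation from $A_i$ is $\min(\lambda g_i,A_i)\le\lambda g_i$. Since $0\le g_i\le b$, this gives $|\widetilde A_{i,t}-A_i|\le\lambda b$.

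\textbf{Normalization term.} First, $|q_i|\le\beta|u_i|+\gamma[n_{\mathrm{fb},i}-1]_+\le\eta_R$, so $|q_i-\bar q|\le 2\eta_R$. Writing $\sigma_0=\widehat\sigma(r)+\epsilon$ and $\sigma_1=\widehat\sigma(r+q)+\epsilon$, both at least $\nu$, decompose
\[
A_i-A_i^0=\frac{q_i-\bar q}{\sigma_1}+(r_i-\bar r)\Bigl(\frac1{\sigma_1}-\frac1{\sigma_0}\Bigr).
\]
The first piece is at most $2\eta_R/\nu$. For the second, use $|r_i-\bar r|\le1$ (since $r\in[0,1]$) and
\[
\Bigl|\tfrac1{\sigma_1}-\tfrac1{\sigma_0}\Bigr|=\frac{|\sigma_1-\sigma_0|}{\sigma_0\sigma_1}\le\frac{|\widehat\sigma(r+q)-\widehat\sigma(r)|}{\nu^2}.
\]
Then apply the triangle inequality for the population standard deviation, which is a seminorm: $|\widehat\sigma(r+q)-\widehat\sigma(r)|\le\widehat\sigma(q)\le\eta_R$. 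This gives the $\eta_R/\nu^2$ term, so the total is at most $c_{\mathrm{norm}}\eta_R$.

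\textbf{Main obstacle.} I expect the main care to be needed in this normalization step. The $1/\nu^2$ coefficient relies on the bound $\widehat\sigma(q)\le\max_i|q_i|$ rather than the cruder $2\eta_R$, which would spoil the constant. It also relies on $\widehat\sigma$ being the population version, so that it is a genuine seminorm on the group vector. Everything else is bookkeeping.
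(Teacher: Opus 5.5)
Your proposal is correct and follows the paper's proof essentially step for step. Like the paper, you bound $|\widetilde A_{i,t}-A_i^0|$ by $c_{\mathrm{norm}}\eta_R+\lambda b$ (normalization perturbation plus shaping perturbation), push this through $B_\pi$ to get $\|G_{\mathrm{CAFE}}-G_{\mathrm{task}}\|\le\Delta_{\mathrm{CAFE}}$, and finish with Cauchy--Schwarz. Your explicit decomposition of $A_i-A_i^0$ and the seminorm argument $|\widehat\sigma(r+q)-\widehat\sigma(r)|\le\widehat\sigma(q)\le\eta_R$ justify a step the paper only asserts. One small correction to your remark: the sample standard deviation is also a seminorm, so what the population version actually buys you is the bound $\widehat\sigma(q)\le\max_i|q_i|$.
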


\begin{proof}\renewcommand{\qedsymbol}{}
Since \(|q_i|\leq\eta_R\), we have
\(|q_i-\overline q|\leq2\eta_R\) and
\(|\widehat\sigma(r+q)-\widehat\sigma(r)|\leq\eta_R\). The denominator
bound and \(|r_i-\overline r|\leq1\) therefore imply
\[
|A_i-A_i^0|\leq c_{\mathrm{norm}}\eta_R.
\]
Advantage shaping changes any eligible pre- or post-feedback token by at most
\(\lambda b\), while leaving other tokens unchanged. Hence
\(|\widetilde A_{i,t}-A_i^0|\leq c_{\mathrm{norm}}\eta_R+\lambda b\).
Multiplying by the policy scores and applying the definition of \(B_\pi\)
gives
\[
\|G_{\mathrm{CAFE}}-G_{\mathrm{task}}\|
\leq\Delta_{\mathrm{CAFE}}.
\]
Therefore, by Cauchy--Schwarz,
\[
\begin{aligned}
\left\langle G_{\mathrm{CAFE}},G_{\mathrm{task}}\right\rangle
&=\|G_{\mathrm{task}}\|^2
+\left\langle G_{\mathrm{CAFE}}-G_{\mathrm{task}},
G_{\mathrm{task}}\right\rangle\\
&\geq\|G_{\mathrm{task}}\|^2
-\Delta_{\mathrm{CAFE}}\|G_{\mathrm{task}}\|,
\end{aligned}
\]
which is positive under the stated condition.
\end{proof}

\section{Additional Ablation Results}
\label{app:additional_ablations}

\subsection{Model Size Ablation}
\label{app:model_size_ablation}
To test whether CAFE's gains depend on the capacity of the 7B backbone, we repeat the training pipeline with Qwen2.5-3B-Instruct~\citep{qwen2025qwen25technicalreport} under the same settings and compare it with existing 3B search agents. \Cref{tab:model_size_ablation} reports the 3B results on all seven benchmarks; the corresponding 7B results are given in \Cref{tab:main_results}. At 3B scale, CAFE reaches an average EM/F1 of 48.8/57.4,  outperforming both existing 3B baselines by a clear margin. Its performance is also comparable to several 7B search agents, indicating that the benefit of coupled feedback learning is not confined to higher-capacity backbones.

\begin{table}[H]
\centering
\setlength{\tabcolsep}{3pt}
\renewcommand{\arraystretch}{1.15}
\resizebox{\linewidth}{!}{%
\begin{tabular}{l|cccccccccccccccc}
\Xhline{1.2pt}
\rowcolor{CadetBlue!20}
\textbf{Method}
& \multicolumn{2}{c}{\textbf{2Wiki}}
& \multicolumn{2}{c}{\textbf{HotpotQA}}
& \multicolumn{2}{c}{\textbf{MuSiQue}}
& \multicolumn{2}{c}{\textbf{PopQA}}
& \multicolumn{2}{c}{\textbf{TriviaQA}}
& \multicolumn{2}{c}{\textbf{Bamboogle}}
& \multicolumn{2}{c}{\textbf{NQ}}
& \multicolumn{2}{c}{\textbf{Avg.}} \\
\rowcolor{CadetBlue!20}
& \textbf{EM} & \textbf{F1}
& \textbf{EM} & \textbf{F1}
& \textbf{EM} & \textbf{F1}
& \textbf{EM} & \textbf{F1}
& \textbf{EM} & \textbf{F1}
& \textbf{EM} & \textbf{F1}
& \textbf{EM} & \textbf{F1}
& \textbf{EM} & \textbf{F1} \\
\Xhline{1.2pt}
\multicolumn{17}{c}{\textit{Existing 3B Search Agents}} \\
\hline
\rowcolor{gray!10}
Search-R1-3B$^{*}$~\citeyearpar{jin2025searchr1trainingllmsreason}
& 58.8 & 68.1
& 46.2 & 57.8
& 24.4 & 32.9
& 37.0 & 43.5
& 56.6 & 63.2
& 41.6 & 53.9
& 34.4 & 44.1
& 42.7 & 51.9 \\
\rowcolor{gray!10}
R-Search-3B$^{*}$~\citeyearpar{R-search}
& 65.0 & 72.6
& 43.4 & 54.4
& \underline{25.8} & 34.8
& 37.0 & 44.9
& 56.0 & 64.0
& 37.6 & 49.8
& 35.2 & \underline{46.0}
& 42.9 & 52.4 \\
\hline
\multicolumn{17}{c}{\textit{CAFE (Qwen2.5-3B-Instruct)}} \\
\hline
\rowcolor{gray!10}
Qwen2.5-3B-Instruct
& 11.4 & 27.6
& 14.6 & 25.8
& 3.8 & 9.2
& 11.6 & 18.2
& 18.8 & 32.1
& 12.8 & 22.9
& 3.8 & 12.8
& 11.0 & 21.2 \\
\rowcolor{gray!10}
\hspace{1em}+ SFT
& 56.2 & 66.8
& 36.0 & 45.2
& 16.6 & 25.4
& 30.6 & 38.2
& 45.2 & 54.5
& 32.0 & 44.3
& 27.0 & 38.1
& 34.8 & 44.6 \\
\rowcolor{gray!10}
\hspace{1em}+ SFT + GRPO
& \underline{78.4} & \underline{83.8}
& \underline{47.6} & \underline{58.2}
& 25.2 & \underline{35.3}
& \underline{40.0} & \underline{46.6}
& \underline{59.6} & \underline{67.1}
& \underline{43.2} & \underline{54.4}
& \underline{35.8} & 45.0
& \underline{47.1} & \underline{55.8} \\
\rowcolor{gray!10}
\textbf{\hspace{1em}+ SFT + CAFE}
& \textbf{80.2} & \textbf{86.1}
& \textbf{49.4} & \textbf{59.9}
& \textbf{26.1} & \textbf{35.8}
& \textbf{42.4} & \textbf{47.6}
& \textbf{61.6} & \textbf{68.9}
& \textbf{44.0} & \textbf{56.3}
& \textbf{37.6} & \textbf{47.2}
& \textbf{48.8} & \textbf{57.4} \\
\Xhline{1.2pt}
\end{tabular}%
}
\caption{Results for 3B-scale models. Best and second-best results are bolded and underlined, respectively.}
\label{tab:model_size_ablation}
\end{table}

\subsection{Detailed Hallucination Results}
\label{app:hallucination_results}
We report the per-dataset hallucination rates underlying
\Cref{fig:hallucination_rate}. We use the answer-level criterion defined in
\Cref{sec:hallucination_analysis} and evaluate on the same seven test sets as the
main results. As shown in \Cref{tab:hallucination_rates}, outcome-reward GRPO
reduces the average hallucination rate from \(29.88\%\) to \(17.63\%\), while
CAFE further lowers it to \(12.60\%\). CAFE improves over GRPO on every
benchmark, with the largest reductions on NQ (10.8 percentage points) and
MuSiQue (9.4 points).

\begin{table}[H]
\centering
\setlength{\tabcolsep}{3pt}
\setlength{\arrayrulewidth}{1.2pt}
\renewcommand{\arraystretch}{1.25}
\resizebox{0.99\linewidth}{!}{%
\begin{tabular}{
>{\centering\arraybackslash}m{1.65cm}|
*{8}{>{\centering\arraybackslash}m{1.65cm}}
}
\hline
\rowcolor{CadetBlue!20}
\textbf{Method}
& \textbf{2Wiki}
& \textbf{HotpotQA}
& \textbf{MuSiQue}
& \textbf{PopQA}
& \textbf{TriviaQA}
& \textbf{Bamboogle}
& \textbf{NQ}
& \textbf{Avg.} \\
\hline
\rowcolor{gray!10}
Base
& 36.60
& 32.80
& 35.27
& 24.20
& 22.29
& 18.40
& 39.60
& 29.88 \\
\rowcolor{gray!10}
GRPO
& \underline{11.00}
& \underline{19.40}
& \underline{26.40}
& \underline{10.82}
& \underline{14.00}
& \underline{16.00}
& \underline{25.80}
& \underline{17.63} \\
\rowcolor{gray!10}
\textbf{CAFE}
& \textbf{10.00}
& \textbf{16.00}
& \textbf{17.00}
& \textbf{8.00}
& \textbf{9.40}
& \textbf{12.80}
& \textbf{15.00}
& \textbf{12.60} \\
\hline
\end{tabular}%
}
\caption{Hallucination rates (\%, lower is better) across seven benchmarks.
Best and second-best results are bolded and underlined, respectively.}
\label{tab:hallucination_rates}
\end{table}

\subsection{Iteration Schedule Ablation}
\label{app:iteration_schedule}
\Cref{tab:iteration_schedule_ablation} compares three online--offline schedules
under the same budget of 500 online RL steps. We denote a schedule by
\(H\times K\), where \(H\) online RL steps are followed by one RDPO update and
the cycle is repeated for \(K\) iterations. Among the tested schedules,
\(100\times5\) achieves the highest average EM and F1 and performs best on most
datasets. We therefore use \(100\times5\) as the default schedule.

\begin{table}[H]
\centering
\setlength{\tabcolsep}{3pt}
\renewcommand{\arraystretch}{1.15}
\resizebox{\linewidth}{!}{%
\begin{tabular}{l|cccccccccccccccc}
\Xhline{1.2pt}
\rowcolor{CadetBlue!20}
& \multicolumn{2}{c}{\textbf{2Wiki}}
& \multicolumn{2}{c}{\textbf{HotpotQA}}
& \multicolumn{2}{c}{\textbf{MuSiQue}}
& \multicolumn{2}{c}{\textbf{PopQA}}
& \multicolumn{2}{c}{\textbf{TriviaQA}}
& \multicolumn{2}{c}{\textbf{Bamboogle}}
& \multicolumn{2}{c}{\textbf{NQ}}
& \multicolumn{2}{c}{\textbf{Avg.}} \\
\rowcolor{CadetBlue!20}
\textbf{Schedule}
& \textbf{EM} & \textbf{F1}
& \textbf{EM} & \textbf{F1}
& \textbf{EM} & \textbf{F1}
& \textbf{EM} & \textbf{F1}
& \textbf{EM} & \textbf{F1}
& \textbf{EM} & \textbf{F1}
& \textbf{EM} & \textbf{F1}
& \textbf{EM} & \textbf{F1} \\
\Xhline{1.2pt}
\rowcolor{gray!10}
\(50\times10\)
& 81.4 & 85.5
& \underline{51.6} & 62.8
& 27.6 & \underline{37.4}
& \underline{46.0} & \underline{50.9}
& 58.6 & \underline{67.1}
& \underline{48.8} & \underline{58.2}
& 34.0 & 43.8
& 49.7 & \underline{58.0} \\
\rowcolor{gray!10}
\(\mathbf{100\times5}\)
& \textbf{84.0} & \textbf{89.2}
& \textbf{53.4} & \textbf{64.8}
& \textbf{30.2} & \textbf{39.1}
& \textbf{46.4} & \textbf{51.6}
& \textbf{62.6} & \textbf{69.9}
& \textbf{50.4} & \textbf{61.4}
& \textbf{40.8} & \textbf{49.2}
& \textbf{52.5} & \textbf{60.7} \\
\rowcolor{gray!10}
\(250\times2\)
& \underline{81.6} & \underline{86.5}
& \textbf{53.4} & \underline{63.1}
& \underline{28.2} & \underline{37.4}
& 44.0 & 47.6
& \underline{59.4} & 66.3
& 45.6 & 57.3
& \underline{36.8} & \underline{46.4}
& \underline{49.9} & 57.8 \\
\Xhline{1.2pt}
\end{tabular}%
}
\caption{Iteration-schedule ablation under 500 online RL
steps. A schedule \(H\times K\) performs \(H\) online RL steps followed by one
RDPO update and repeats this cycle \(K\) times. Best and second-best results are
bolded and underlined, respectively.}
\label{tab:iteration_schedule_ablation}
\end{table}

\subsection{Results on BrowseComp-Plus}
\label{app:browsecomp_plus}

\noindent
\begin{minipage}[t]{0.55\linewidth}
\textbf{BrowseComp and BrowseComp-Plus.}
BrowseComp~\citep{wei2025browsecomp} evaluates deep-research agents on short-answer questions whose solutions require persistent browsing for hard-to-find and interconnected evidence. Although its live, black-box search API reflects realistic browsing conditions, the dynamic backend limits controlled and reproducible evaluation. BrowseComp-Plus~\citep{chen2025browsecompplusfairtransparentevaluation} addresses this issue by replacing live search with a fixed curated corpus and a shared local retriever built from human-verified supporting documents and mined hard negatives, enabling consistent comparison under a controlled retrieval environment.
\end{minipage}
\hfill
\begin{minipage}[t]{0.4\linewidth}
\vspace{0pt}
\centering
\captionsetup{hypcap=false,justification=centering,singlelinecheck=false}
\captionof{table}{EM and F1 scores (\%) on BrowseComp-Plus.}
\label{tab:browsecomp_plus_results}
\setlength{\tabcolsep}{10pt}
\renewcommand{\arraystretch}{1.15}
\begin{tabular}{@{}lcc@{}}
\toprule
\textbf{Setting} & \textbf{EM} & \textbf{F1} \\
\midrule
Qwen2.5-7B-Instruct & 4.4 & 6.8 \\
\hspace{1em}+ SFT & 5.3 & 7.9 \\
\hspace{1em}+ SFT + GRPO&  6.8 & 9.9  \\
\hspace{1em}+ SFT +\textbf{CAFE} &  7.7 & 10.6\\
\bottomrule
\end{tabular}
\end{minipage}

To examine whether the learned feedback mechanism extends to realistic long-horizon deep-research tasks, we also evaluate the 7B checkpoints from our main experiments on BrowseComp-Plus~\citep{chen2025browsecompplusfairtransparentevaluation}. As shown in \Cref{tab:browsecomp_plus_results}, performance improves steadily across training stages. The improvement suggests that trajectory-level feedback remains useful when solving substantially longer and more demanding search tasks.

\section{Training Dynamics}
\label{app:training_dynamics}
\Cref{fig:training_dynamics} compares test-accuracy and feedback-policy-entropy dynamics
under the same 500 online RL steps. All test-accuracy curves include the shared SFT
checkpoint at step 0 and report evaluations every 20 steps. Outcome-only GRPO
improves rapidly but fluctuates after roughly 200 steps. CFE raises late-stage
test accuracy, while adding feedback-aware advantage shaping yields the highest
final value and a more sustained improvement.

We measure prompt-level empirical feedback-routing entropy. For prompt \(x\), let \(p_x\) be the fraction of
rollouts that request feedback and define
\[
H_x=-p_x\log_2 p_x-(1-p_x)\log_2(1-p_x),
\]
with \(0\log_2 0=0\). We report the mean of \(H_x\) over prompts. At the final
checkpoint, CAFE retains \(0.496\) bits of routing entropy, compared with
\(0.221\) for GRPO and \(0.099\) for GRPO+CFE. The substantially higher entropy of CAFE indicates that advantage shaping prevents premature routing collapse and preserves the ability to request feedback selectively as trajectories enter different states. Since CAFE also achieves higher test accuracy, this diversity reflects useful exploration and adaptive feedback routing rather than collapsing into a fixed pattern.

\begin{figure}[H]
\centering
\begin{subfigure}[t]{0.49\linewidth}
  \centering
  \includegraphics[width=\linewidth]{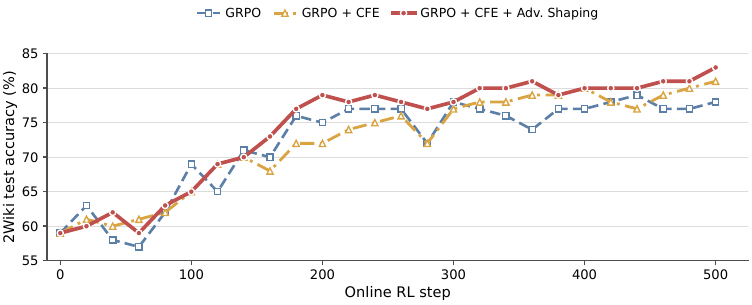}
  \caption{Test accuracy}
  \label{fig:training_dynamics_accuracy}
\end{subfigure}
\hfill
\begin{subfigure}[t]{0.49\linewidth}
  \centering
  \includegraphics[width=\linewidth]{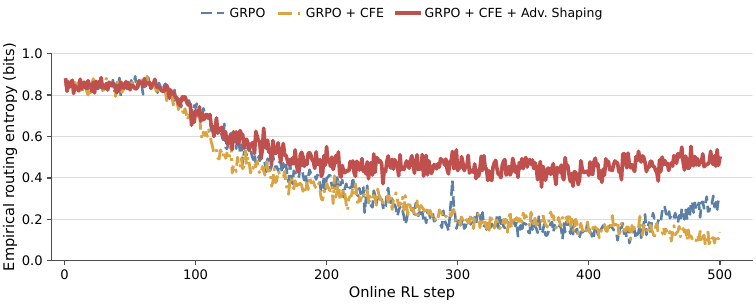}
  \caption{Empirical feedback-routing entropy}
  \label{fig:prompt_policy_entropy_dynamics}
\end{subfigure}
\caption{Online training dynamics through 500 RL steps. (a) Test accuracy on
2Wiki. (b) Prompt-level empirical entropy of feedback policy.}
\label{fig:training_dynamics}
\end{figure}

\clearpage
\section{Case Study}
\label{app:case}

\subsection{CAFE: Feedback-Conditioned Recovery}

\begin{tcolorbox}[cafeQueryBox]
\textbf{Dataset.} BrowseComp-Plus
\hfill
\cafetag{CAFEBlue}{CAFE RECOVERY}

\smallskip
\textbf{Query.} I am looking for the name of a trail about
\(0.50\)--\(1\) mile in length and \(1\)--\(3\) feet in width with an
elevation gain of about \(150\)--\(400\) feet. The trail includes a structure
dating back to the 1800s. As of December 2022, it is located about
\(218\)--\(220\) miles in aerial distance from an airport in Colorado and
\(1{,}104\)--\(1{,}106\) miles in aerial distance from an airport in Chicago.

\smallskip
\textbf{Ground truth.} Old Kiln Trail
\end{tcolorbox}

\noindent
\begin{minipage}[t]{0.155\linewidth}
\begin{tcolorbox}[
  cafeStageBox,
  colback=CAFEBlue!6,
  colframe=CAFEBlue!65!black
]
\scriptsize\bfseries Plan\\[-0.1em]\& Search
\end{tcolorbox}
\end{minipage}
\hfill{\color{CAFEGray}\(\rightarrow\)}\hfill
\begin{minipage}[t]{0.155\linewidth}
\begin{tcolorbox}[
  cafeStageBox,
  colback=CAFEAmber!8,
  colframe=CAFEAmber!75!black
]
\scriptsize\bfseries Missed\\[-0.1em]Anchor
\end{tcolorbox}
\end{minipage}
\hfill{\color{CAFEGray}\(\rightarrow\)}\hfill
\begin{minipage}[t]{0.155\linewidth}
\begin{tcolorbox}[
  cafeStageBox,
  colback=CAFERed!7,
  colframe=CAFERed!70!black
]
\scriptsize\bfseries Request\\[-0.1em]Feedback
\end{tcolorbox}
\end{minipage}
\hfill{\color{CAFEGray}\(\rightarrow\)}\hfill
\begin{minipage}[t]{0.155\linewidth}
\begin{tcolorbox}[
  cafeStageBox,
  colback=CAFERed!7,
  colframe=CAFERed!70!black
]
\scriptsize\bfseries Critic\\[-0.1em]Strategy
\end{tcolorbox}
\end{minipage}
\hfill{\color{CAFEGray}\(\rightarrow\)}\hfill
\begin{minipage}[t]{0.155\linewidth}
\begin{tcolorbox}[
  cafeStageBox,
  colback=CAFEGreen!7,
  colframe=CAFEGreen!70!black
]
\scriptsize\bfseries Re-plan\\[-0.1em]\& Recover
\end{tcolorbox}
\end{minipage}

\begin{tcblisting}{
  enhanced,
  breakable,
  listing only,
  listing engine=listings,
  listing options={style=cafeCasePrompt},
  title={1. Initial airport-distance search},
  coltitle=CAFEInk,
  fonttitle=\footnotesize\bfseries,
  colback=CAFEBlue!3,
  colframe=CAFEBlue!55!black,
  colbacktitle=CAFEBlue!9,
  boxrule=0.55pt,
  arc=0.8mm,
  left=1.4mm,
  right=1.4mm,
  top=1.1mm,
  bottom=1.1mm
}
<think>
To find the name of this specific trail, I need to first identify the
location based on the given information. The trail is located between
218-220 miles in aerial distance from an airport in Colorado and
1,104-1,106 miles in aerial distance from an airport in Chicago. This
suggests the trail is in the Rocky Mountains region, likely in
Colorado or nearby states.
</think>

<plan>
1. Search for the location of the trail based on the given aerial
   distances.
2. Once the location is identified, search for the name of the trail
   that matches the described characteristics.
</plan>

<think>
I will start by searching for the location of the trail based on the
given aerial distances from the airports in Colorado and Chicago.
</think>

<tool_call>
{"name": "search", "arguments": {"query":
["Location of the trail based on aerial distances from airports in
Colorado and Chicago"]}}
</tool_call>

<tool_response>
[Result 1]
Title: The distance from Chicago O'Hare International Airport to
Grand Junction Regional Airport is:
Date: 2025-01-01

1,100 miles / 1 771 km flying

The total driving distance from ORD to GJT is 1,239 miles or 1 994
kilometers.

The total straight line flight distance from ORD to GJT is 1,100
miles. This is equivalent to 1 771 kilometers or 956 nautical miles.

Your trip begins at Chicago O'Hare International Airport in Chicago,
Illinois. It ends at Grand Junction Regional Airport in Grand
Junction, Colorado.

Your flight direction from ORD to GJT is West (-93 degrees from
North).

The distance calculator helps you figure out how far it is to get from
ORD to GJT. It does this by computing the straight line flying
distance ("as the crow flies") and the driving distance if the route
is drivable. It uses all this data to compute the total travel
mileage.

ICAO: KORD
FAA: ORD
City: Chicago
State: Illinois
Zip code: 60666
Country: United States
Category: airports

ICAO: KGJT
FAA: GJT
City: Grand Junction
State: Colorado
Zip code: 81506
Country: United States
Category: airports

Travelmath helps you find distances based on actual road trip
directions, or the straight line flight distance. You can get the
distance between cities, airports, states, countries, or zip codes to
figure out the best route to travel to your destination. Compare the
results to the straight line distance to determine whether it is
better to drive or fly. The database uses the latitude and longitude
of each location to calculate distance using the great circle distance
formula. The calculation is done using the Vincenty algorithm and the
WGS84 ellipsoid model of the Earth, which is the same one used by most
GPS receivers. This gives you the flying distance "as the crow flies."

[Result 2]
Title: O'Hare International Airport - Wikipedia
Date: 2003-06-18

Name: Chicago O'Hare International Airport
IATA: ORD
ICAO: KORD
FAA: ORD
Type: Public
Owner-operator: Chicago Department of Aviation
City served: Chicago metropolitan area
Location: O'Hare, Chicago, Illinois, U.S.
Opened: 1944
Timezone: CST
UTC: UTC-06:00
Summer: CDT
UTCS: UTC-05:00
Elevation: 668 feet / 204 meters
Coordinates: 41 58 43, 87 54 17

[Result 3]
Title: Simple Flying
Author: Jake Hardiman
Date: 2022-09-03

Denver International Airport (DEN) is an intercontinental hub situated
in the US state of Colorado. It stands out as being North America's
largest airport by surface area, and second in the world only to King
Fahd International Airport (DMM) in Dammam, Saudi Arabia. One reason
for its large surface area is its extremely long runways, but why does
it need these?

Denver International Airport opened in February 1995, replacing the
city's former Stapleton International Airport. Its first five runways
were equal in length. Runways 7/25, 8/26, 16L/34R, 17L/35R, and
17R/35L all measured exactly 12,000 feet long. This distance equates
to 3,658 meters, or 2.27 miles. However, they were eventually dwarfed
by another landing strip.

September 2003 saw the opening of Denver's sixth runway, known as
16R/34L. This concrete strip differed from the existing five runways
by being 4,000 feet longer than them. This gave it a total length of
16,000 feet, equivalent to 4,877 meters or 3.03 miles. This
astronomically long runway represented a 33% extension compared to the
five that had come before it.

The reason that Denver's runways are so long is down to the altitude
at which the airport is situated. Colorado's capital is known as the
"Mile High City," and, correspondingly, its airport finds itself
situated 1,656 m / 5,434 ft above sea level. This results in a
challenging set of atmospheric conditions for pilots known as "hot and
high." Low air density is a characteristic of hot and high conditions.
[Search excerpt truncated.]
</tool_response>
\end{tcblisting}

\begin{tcolorbox}[
  cafeContentBox,
  title={Stage 1 analysis: the results point to Grand Junction},
  colback=CAFEBlue!3,
  colframe=CAFEBlue!55!black,
  colbacktitle=CAFEBlue!9
]
\footnotesize
The first result reports an O'Hare--Grand Junction distance of roughly
\(1{,}100\) miles and identifies Grand Junction as a location in Colorado.
This is a plausible geographic lead, but the agent does not use it to formulate
a trail query that includes the remaining attributes.
\end{tcolorbox}

\begin{tcblisting}{
  enhanced,
  breakable,
  listing only,
  listing engine=listings,
  listing options={style=cafeCasePrompt},
  title={2. Missed geographic anchor and feedback intervention},
  coltitle=CAFEInk,
  fonttitle=\footnotesize\bfseries,
  colback=CAFEAmber!4,
  colframe=CAFEAmber!70!black,
  colbacktitle=CAFEAmber!11,
  boxrule=0.55pt,
  arc=0.8mm,
  left=1.4mm,
  right=1.4mm,
  top=1.1mm,
  bottom=1.1mm
}
<think>
Wait, I'm getting confused by the search results. The query mentions
Colorado airports, but I'm seeing information about Chicago O'Hare and
Denver International Airport. I need to re-read these results more
carefully - they actually do mention Colorado airports like Grand
Junction Regional Airport (KGJT). I should focus on extracting the
geographic information from what I already have rather than searching
again with similar queries.
</think>

<request_feedback>
</request_feedback>

<feedback>
Core objective: Identify the name of a specific trail based on its
location characteristics. Issue: You misread the search results - the
tool response actually contains geographic information about Colorado
airports (Grand Junction Regional Airport, etc.) that is relevant to
your query. You incorrectly concluded the search failed when the
information was present. Fix: Re-examine the existing search results
carefully, noting the specific Colorado locations mentioned (Grand
Junction, Colorado; Denver, Colorado). Use this geographic context to
search specifically for trails in the Rocky Mountains region of
Colorado that match the described characteristics (length, width,
elevation gain, and the 1800s-era structure).
</feedback>

<think>
You're right, I misread the search results. The results do mention
Colorado airports like Grand Junction Regional Airport (KGJT) and
Denver International Airport (DEN). I need to search for trails in the
Rocky Mountains region of Colorado that match the described
characteristics. Let me search specifically for trails in Colorado
with the given attributes.
</think>
\end{tcblisting}

\begin{tcolorbox}[
  cafeContentBox,
  title={Stage 2 analysis: feedback turns the diagnosis into a search query},
  colback=CAFEAmber!4,
  colframe=CAFEAmber!70!black,
  colbacktitle=CAFEAmber!11
]
\footnotesize
Before requesting feedback, the agent recognizes that it has misread the
results but does not specify how to revise the search. The critic points the
agent back to the Colorado locations in the existing results and recommends
combining that context with the trail's length, width, elevation gain, and
nineteenth-century structure. The feedback provides a concrete next step
rather than a new diagnosis.
\end{tcolorbox}

\begin{tcblisting}{
  enhanced,
  breakable,
  listing only,
  listing engine=listings,
  listing options={style=cafeCasePrompt},
  title={3. Search combining geography and trail attributes},
  coltitle=CAFEInk,
  fonttitle=\footnotesize\bfseries,
  colback=CAFEGreen!3,
  colframe=CAFEGreen!65!black,
  colbacktitle=CAFEGreen!9,
  boxrule=0.55pt,
  arc=0.8mm,
  left=1.4mm,
  right=1.4mm,
  top=1.1mm,
  bottom=1.1mm
}
<tool_call>
{"name": "search", "arguments": {"query":
["Colorado Rocky Mountains trail 0.50-1 mile 1-3 feet wide 150-400
feet elevation gain 1800s structure"]}}
</tool_call>

<tool_response>
[Result 1]
Title: Old Kiln Trail
Date: 2009-12-01

Round Trip Distance: 1 mile
Difficulty: Easy
Elevation: 6234 - 6302 feet
Cellphone: 3-5 bars
Time: 30 mins.
Trailhead: Old Kiln
Fee: none
Attractions: Old lime kiln

The Old Kiln Trail is located in the BLM Bangs Canyon Management Area,
just off of Little Park Road, south of Grand Junction. The trail
follows a former dirt road that passes near an old primitive lime
kiln. Lime kilns were used to heat limestone to make quicklime. The
resulting product had a variety of uses including making mortar and
whitewash.
If you are driving up Little Park Road it is hard to miss the
trailhead which is just past the turnoff for the Bangs Canyon Staging
Area where there are restrooms and the trailheads for the Mica Mine
and Rough Canyon trails.
After passing through the fence follow the old road to the left and
follow it as it travels northwesterly toward the monument.
There is a fork to the right in the road just before you go a quarter
mile. You will be able to see the kiln behind the trees about a
hundred yards away.
The kiln was constructed with sandstone rocks that were probably
quarried in the area and sealed with bentonite which is also
plentiful. The limestone would have been reduced to quicklime by
layering it inside the kiln with wood or coal. The layered stack would
have then been set on fire from the bottom and allowed to burn. The
lime would begin to break down when the temperature reached 900 C
(1652 F).

This was probably a vent hole to allow the fire to breathe. This small
inefficient kiln probably became obsolete when the railroad or other
means of transportation were able to supply the valley with what it
needed.

There are some nice views of the surrounding area from the hill above
the kiln. A walk out to the end of the road provides a nice overlook
of the Ribbon Trail area.
[Search excerpt truncated.]

[Result 2]
Title: Old Kiln
Date: 2025-01-01

Difficulty: Medium
Length: 0.80 Miles
Trail Width: 1-3 feet
Elevation Gain: 153
Activity: Hikers, Dogs, Horses
Trail Description:
The Old Kiln Trail is a short trail that travels through open fields,
cliffsides, ridges and rock outcroppings. Enjoy this trail's unique
geological features of landslides, faults, fossils, fish scales and
mudstone layers. Keep an eye out for the historic limestone kiln
dating from the 1800s. Dogs must be on leash.

[Result 3]
Title: Colorado Hundred Highest / Centennials
Date: 2020-08-

Colorado Hundred Highest Mountains / Centennials
Eric Gilbertson
June 16 - July 19, 2020
New Fastest Known Time: 33 days 23 hours (self-supported, incorrectly
listed as "supported" on FKT website)
Peaks: 100 (plus 20 bonus peaks)
Miles hiked: 685
Elevation gain: 295,000 ft
Rest days: 1
Longest day: 49 miles, 18k ft gain
Most Centennials in a day: 10
The Colorado Centennials are the hundred tallest mountains in
Colorado with at least 300 ft of prominence. They range in elevation
from 13,809 ft (Dallas Peak) to 14,433 ft (Mt Elbert). Many are
standard hikes and scrambles, but three peaks are technical (Dallas,
Teakettle, and Jagged).
[Search excerpt truncated.]
</tool_response>
<think>
Perfect! I found the answer. The Old Kiln Trail matches all the
characteristics described in the question:
- Length: 1 mile (also mentions 0.80 Miles)
- Width: 1-3 feet
- Elevation Gain: 153 feet
- Location: Grand Junction, Colorado (specifically in the BLM Bangs
  Canyon Management Area, just off of Little Park Road)
- 1800s-era structure: The historic limestone kiln

The search results clearly state "Keep an eye out for the historic
limestone kiln dating from the 1800s" and describe the kiln's
construction details. This is the specific trail that matches all the
given criteria.
</think>

<answer>
Old Kiln Trail
</answer>
\end{tcblisting}

\begin{tcolorbox}[
  cafeContentBox,
  title={Stage 3 analysis: the revised query identifies the trail},
  colback=CAFEGreen!3,
  colframe=CAFEGreen!65!black,
  colbacktitle=CAFEGreen!9
]
\footnotesize
The revised query restricts the search to Colorado's Rocky Mountains and
includes all four trail attributes. It retrieves \emph{Old Kiln Trail}, whose
reported attributes match the query: a length of \(0.80\) miles, a width of
\(1\)--\(3\) feet, \(153\) feet of elevation gain, and a limestone kiln dating
to the 1800s. The agent then returns the correct answer.
\end{tcolorbox}

\begin{tcolorbox}[
  enhanced,
  colback=CAFEGreen!6,
  colframe=CAFEGreen!70!black,
  boxrule=0.65pt,
  arc=0.8mm,
  left=1.4mm,
  right=1.4mm,
  top=1.1mm,
  bottom=1.1mm,
  title={Case analysis: what changes after feedback},
  colbacktitle=CAFEGreen!11,
  coltitle=CAFEInk,
  fonttitle=\footnotesize\bfseries
]
\footnotesize
\textbf{Role of feedback.} The initial search result reports an ORD--GJT
distance of roughly \(1{,}100\) miles and identifies GJT as Grand Junction,
Colorado. This is a strong geographic lead, but the agent does not use it to
formulate the next trail query. In the BASE rollout, the agent instead commits
to Denver and eventually returns \emph{Old Elitch's Park Trail}.

Before requesting feedback, the CAFE agent already recognizes that it has
misread the result, so the critic does not supply a new diagnosis. Instead, the
critic directs the agent back to the retrieved geographic evidence and
specifies which trail attributes to search together. The resulting
Colorado-focused query retrieves \emph{Old Kiln Trail} near Grand Junction,
along with all four matching attributes, and leads to the correct answer
(EM \(=1\)). Here, feedback is useful because it turns recognition of the
error into a specific next query. The trajectory does not, however, verify both
airport distances independently; the recovery relies on the Colorado location
constraint and the four trail attributes.
\end{tcolorbox}

\clearpage
\subsection{BASE: Unrecovered Search Drift}

\begin{tcolorbox}[
  cafeQueryBox,
  colback=CAFERed!3,
  colframe=CAFERed!60!black
]
\textbf{Dataset.} BrowseComp-Plus
\hfill
\cafetag{CAFERed}{BASE FAILURE}

\smallskip
\textbf{Shared query.} The same trail-identification query shown above.

\smallskip
\textbf{Ground truth.} Old Kiln Trail
\hfill
\textbf{BASE output.} Old Elitch's Park Trail
\end{tcolorbox}

\noindent
\begin{minipage}[t]{0.155\linewidth}
\begin{tcolorbox}[
  cafeStageBox,
  colback=CAFEAmber!8,
  colframe=CAFEAmber!75!black
]
\scriptsize\bfseries Premature\\[-0.1em]Anchor
\end{tcolorbox}
\end{minipage}
\hfill{\color{CAFEGray}\(\rightarrow\)}\hfill
\begin{minipage}[t]{0.155\linewidth}
\begin{tcolorbox}[
  cafeStageBox,
  colback=CAFEBlue!6,
  colframe=CAFEBlue!65!black
]
\scriptsize\bfseries Broad\\[-0.1em]Search
\end{tcolorbox}
\end{minipage}
\hfill{\color{CAFEGray}\(\rightarrow\)}\hfill
\begin{minipage}[t]{0.155\linewidth}
\begin{tcolorbox}[
  cafeStageBox,
  colback=CAFEAmber!8,
  colframe=CAFEAmber!75!black
]
\scriptsize\bfseries Repeated\\[-0.1em]Loop
\end{tcolorbox}
\end{minipage}
\hfill{\color{CAFEGray}\(\rightarrow\)}\hfill
\begin{minipage}[t]{0.155\linewidth}
\begin{tcolorbox}[
  cafeStageBox,
  colback=CAFEAmber!8,
  colframe=CAFEAmber!75!black
]
\scriptsize\bfseries Search\\[-0.1em]Exhaustion
\end{tcolorbox}
\end{minipage}
\hfill{\color{CAFEGray}\(\rightarrow\)}\hfill
\begin{minipage}[t]{0.155\linewidth}
\begin{tcolorbox}[
  cafeStageBox,
  colback=CAFERed!7,
  colframe=CAFERed!70!black
]
\scriptsize\bfseries Wrong\\[-0.1em]Answer
\end{tcolorbox}
\end{minipage}

\begin{tcblisting}{
  enhanced,
  breakable,
  listing only,
  listing engine=listings,
  listing options={style=cafeCasePrompt},
  title={1. Initial search under an unverified Denver assumption},
  coltitle=CAFEInk,
  fonttitle=\footnotesize\bfseries,
  colback=CAFEAmber!3,
  colframe=CAFEAmber!65!black,
  colbacktitle=CAFEAmber!10,
  boxrule=0.55pt,
  arc=0.8mm,
  left=1.4mm,
  right=1.4mm,
  top=1.1mm,
  bottom=1.1mm
}
<think>
To find the trail that matches the given criteria, I first need to identify the location. The trail is near an airport that is 218-220 miles from a Colorado airport and 1,104-1,106 miles from a Chicago airport. This narrows down the possible location to the vicinity of Denver, Colorado, as it's the closest major airport to both the specified distances.
</think>
<plan>
1. Search for airports in Colorado that are approximately 218-220 miles from Denver.
2. Search for airports in Illinois that are approximately 1,104-1,106 miles from Denver.
3. Identify the trail that matches the length, width, elevation gain, and includes a 19th-century structure.
</plan>

<think>
Based on the distances provided, the most likely airport in Colorado is Denver International Airport (DEN). The airport in Illinois that is approximately 1,104-1,106 miles from Denver could be O'Hare International Airport (ORD).
</think>
<tool_call>
{"name": "search", "arguments": {"query": ["trails near Denver, Colorado with 19th-century structure", "trails near Chicago, Illinois with 19th-century structure"]}}
</tool_call>
<tool_response>
[Search results summarized for readability: Returned Urban Trails (Colorado Springs), the Old Spanish Trail,
Trail Distances, and Lincoln Home National Historic Site. No result matched all trail constraints jointly.]
</tool_response>
<think>Based on the search results, the trail near Denver, Colorado that includes a structure from the 19th century is the Old Spanish Trail. However, the Old Spanish Trail is a historical route that spans multiple states and is not a specific local trail. For the trail near Chicago, Illinois, the Lincoln Home National Historic Site includes walking paths that are on a paved surface and surrounded by various plant and tree species, but it does not mention a specific 19th-century structure. I need to search for more specific trails near Denver that might fit the criteria better.</think>
\end{tcblisting}

\begin{tcolorbox}[
  cafeContentBox,
  title={Stage 1 analysis: Denver is assumed rather than verified},
  colback=CAFEAmber!3,
  colframe=CAFEAmber!65!black,
  colbacktitle=CAFEAmber!10
]
\footnotesize
The agent infers Denver before establishing which airports define the two
distances. Its broad query over Denver and Chicago returns no trail that
satisfies the full set of constraints. Nevertheless, the agent retains Denver
and treats the Old Spanish Trail as a candidate, even though that route does
not match the requested trail. The failure begins with an unverified location
assumption, not retrieval alone.
\end{tcolorbox}

\begin{tcblisting}{
  enhanced,
  breakable,
  listing only,
  listing engine=listings,
  listing options={style=cafeCasePrompt},
  title={2a. Early Denver-specific requery},
  coltitle=CAFEInk,
  fonttitle=\footnotesize\bfseries,
  colback=CAFEAmber!3,
  colframe=CAFEAmber!65!black,
  colbacktitle=CAFEAmber!10,
  boxrule=0.55pt,
  arc=0.8mm,
  left=1.4mm,
  right=1.4mm,
  top=1.1mm,
  bottom=1.1mm
}
<tool_call>
{"name": "search", "arguments": {"query": ["trails near Denver, Colorado with 19th-century structure"]}}
</tool_call>
<tool_response>
[Search results summarized for readability: Returned Urban Trails (Colorado Springs), the Old Spanish Trail,
and Trail Distances. None matched the required location, dimensions, elevation gain, and nineteenth-century structure jointly.]
</tool_response>
<think>After further searching, there is no specific trail in Denver, Colorado that matches the exact criteria of being about 0.50 - 1 mile in length, 1 - 3 feet in width, and having an elevation gain of 150 - 400 feet with a 19th-century structure. The Old Spanish Trail is a historic route, but it is much longer and not a local trail. The trails near Denver, such as the Urban Trails, do not seem to fit the specific length and width criteria, and none of them mention a 19th-century structure. I will now search for trails near the vicinity of the airport in Colorado, which is likely Denver International Airport, to find a trail that might match the criteria better.</think>
<tool_call>
<tool_response>
[Search results summarized for readability: Returned the Old Spanish Trail, a Denver airport runway article,
and Colorado high-peak material. These reinforced the Denver/Colorado frame but did not identify a trail satisfying the joint constraints.]
</tool_response>
<think>From the search results, I still haven't found a trail that matches the specific criteria of being about 0.50 - 1 mile in length, 1 - 3 feet in width, and having an elevation gain of 150 - 400 feet with a 19th-century structure. The Old Spanish Trail is a historic route, but it is much longer and not a local trail. The information provided about Denver International Airport and Colorado's highest mountains does not seem to be directly relevant to the trail search. I need to narrow down the search to trails near Denver, Colorado, that might have a 19th-century structure and fit the length and elevation gain criteria. I will search for trails near Denver that might have historical structures or are part of the broader network of trails that could potentially fit the description.</think>
<tool_call>
{"name": "search", "arguments": {"query": ["trails near Denver, Colorado with 19th-century structure and elevation gain 150-400 feet"]}}
</tool_call>
<tool_response>
[Search results summarized for readability: Returned Leadville mining-town and high-elevation-city pages plus
Trail Distances. They supplied Colorado history and trail lengths, but no single trail matched the full constraint set.]
</tool_response>
<think>Based on the search results, there is no specific trail near Denver, Colorado that matches the exact criteria of being about 0.50 - 1 mile in length, 1 - 3 feet in width, and having an elevation gain of 150 - 400 feet with a 19th-century structure. The information provided about Leadville, Colorado, highlights its rich history and the presence of historic buildings, but it does not mention a trail that fits the given criteria. I will now focus on trails near Denver that are known to have historical structures and fit the length and elevation gain criteria.</think>
<tool_call>
Tool call with correct format.
<tool_call>
{"name": "search", "arguments": {"query": ["trails near Denver, Colorado with 19th-century structure and elevation gain 150-400 feet"]}}
</tool_call>
<tool_response>
[Search results summarized for readability: Repeated the same Leadville pages and Trail Distances material.
No result identified a Denver-area trail matching the location, dimensions, elevation gain, and nineteenth-century structure together.]
</tool_response>
<think>After further searching, I still haven't found a trail near Denver, Colorado that matches the specific criteria of being about 0.50 - 1 mile in length, 1 - 3 feet in width, and having an elevation gain of 150 - 400 feet with a 19th-century structure. The information provided about Leadville, Colorado, highlights its rich history and the presence of historic buildings, but it does not mention a trail that fits the given criteria. The trails near Denver, such as the Urban Trails, do not seem to have 19th-century structures and do not fit the length and elevation gain criteria. It is possible that the trail I am looking for is not well-documented or might be a local path that is not listed in the general trail databases. I will try to search for more specific local sources or historical records that might mention such a trail.</think>
\end{tcblisting}

\begin{tcolorbox}[
  cafeContentBox,
  title={Stage 2a analysis: repeated failures leave the Denver assumption unchanged},
  colback=CAFEAmber!3,
  colframe=CAFEAmber!65!black,
  colbacktitle=CAFEAmber!10
]
\footnotesize
The agent adds historical structures and elevation gain to the query but
continues to restrict every search to Denver. When these queries return no
match, it does not reconsider the location. Instead, each failure prompts
another Denver-specific search.
\end{tcolorbox}

\begin{tcblisting}{
  enhanced,
  breakable,
  listing only,
  listing engine=listings,
  listing options={style=cafeCasePrompt},
  title={2b. Historical and local sources, still centered on Denver},
  coltitle=CAFEInk,
  fonttitle=\footnotesize\bfseries,
  colback=CAFEAmber!3,
  colframe=CAFEAmber!65!black,
  colbacktitle=CAFEAmber!10,
  boxrule=0.55pt,
  arc=0.8mm,
  left=1.4mm,
  right=1.4mm,
  top=1.1mm,
  bottom=1.1mm
}
<tool_call>

<tool_call>
{"name": "search", "arguments": {"query": ["local trails in Denver, Colorado with 19th-century structure and elevation gain 150-400 feet"]}}
</tool_call>
<tool_response>
[Search results summarized for readability: Returned Trail Distances, Leadville history, and Colorado
high-peak material. None identified a local Denver trail satisfying all requested attributes.]
</tool_response>
<think>After searching local trails in Denver, Colorado, for a trail with a 19th-century structure, a length of about 0.50 - 1 mile, a width of 1 - 3 feet, and an elevation gain of 150 - 400 feet, I still haven't found a suitable trail. The search results do not provide any specific trail that matches all these criteria. It is possible that the trail I am looking for is not well-documented or might be a local path that is not listed in the general trail databases. Given the lack of relevant information, I will re-evaluate the plan and consider searching for more specific historical records or local community knowledge to find the trail.</think>
<tool_call>
{"name": "search", "arguments": {"query": ["historical trails in Denver, Colorado with 19th-century structure and elevation gain 150-400 feet"]}}
</tool_call>
<tool_response>
[Search results summarized for readability: Returned Trail Distances, a Leadville feature, and Mesa Verde
and Lewis and Clark historical-park pages. No result matched the joint trail constraints.]
</tool_response>
<think>Despite searching for historical trails in Denver, Colorado with a 19th-century structure, an elevation gain of 150-400 feet, and a length of about 0.50 - 1 mile, the search results have not provided a specific trail that matches all these criteria. The information provided about Leadville, Colorado, highlights its rich historical and mining boomtown heritage, but does not mention a trail that fits the given criteria. The historical sites mentioned in the search results, such as Mesa Verde National Park and Lewis and Clark National Historical Park, are not located near Denver and do not seem to have trails that match the specified criteria. Given the lack of relevant information, I will consider searching for local historical societies or city records that might have more detailed information about the trails in the area.</think>
<tool_call>
{"name": "search", "arguments": {"query": ["local historical societies in Denver, Colorado for trails with 19th-century structure"]}}
</tool_call>
<tool_response>
[Search results summarized for readability: Returned Urban Trails, Leadville material, BLM history of
northwestern Colorado, and broad Western historical sites. None named a trail meeting all constraints.]
</tool_response>
<think>After searching local historical societies in Denver, Colorado, I found that while there are many urban trails in the area, none of them specifically mention a 19th-century structure. The BLM Cultural Resource Series provides some historical context for the development of northwestern Colorado in the late 19th century, but it does not mention any specific trails. The information provided about Leadville and historical sites in the West does not seem to be directly relevant to the trail in Denver, Colorado that I am looking for. I will try to look for more specific local records or archives that might have information about trails with 19th-century structures in the Denver area.</think>
<tool_call>
{"name": "search", "arguments": {"query": ["local archives in Denver, Colorado for trails with 19th-century structure"]}}
</tool_call>
<tool_response>
[Search results summarized for readability: Returned Urban Trails, broad Western historical sites, and
an Early American Roads and Trails overview. No result matched the joint trail constraints.]
</tool_response>
<think>Despite searching local archives in Denver, Colorado, for trails with 19th-century structures, the search results have not provided a specific trail that matches the given criteria. The information provided about urban trails in Colorado Springs and the historical context of early American roads and trails does not mention a trail near Denver that fits the description. I will try to contact local historical societies or reference facilities at local libraries to get more detailed information about trails with 19th-century structures in the Denver area. These sources might have more specific records or knowledge about such trails.</think>
\end{tcblisting}

\begin{tcolorbox}[
  cafeContentBox,
  title={Stage 2b analysis: new source types, unchanged location},
  colback=CAFEAmber!3,
  colframe=CAFEAmber!65!black,
  colbacktitle=CAFEAmber!10
]
\footnotesize
The agent moves from general trail pages to historical records and local
sources, but all searches remain centered on Denver. It also fails to include
the full set of trail attributes in a single query. As a result, changing the
source type neither tests the location assumption nor yields a candidate that
satisfies the constraints.
\end{tcolorbox}

\begin{tcblisting}{
  enhanced,
  breakable,
  listing only,
  listing engine=listings,
  listing options={style=cafeCasePrompt},
  title={2c. Repeated Denver-specific searches across local sources},
  coltitle=CAFEInk,
  fonttitle=\footnotesize\bfseries,
  colback=CAFEAmber!3,
  colframe=CAFEAmber!65!black,
  colbacktitle=CAFEAmber!10,
  boxrule=0.55pt,
  arc=0.8mm,
  left=1.4mm,
  right=1.4mm,
  top=1.1mm,
  bottom=1.1mm
}
<tool_call>
{"name": "search", "arguments": {"query": ["contact local historical societies in Denver, Colorado for trails with 19th-century structure"]}}
</tool_call>
<tool_response>
[Search results summarized for readability: Returned Urban Trails, a historical-newsletter page, and BLM
history of northwestern Colorado. These offered general context only; no trail matched all constraints.]
</tool_response>
<think>After contacting local historical societies in Denver, Colorado, I have not received any specific information about a trail with a 19th-century structure, a length of about 0.50 - 1 mile, a width of 1 - 3 feet, and an elevation gain of 150 - 400 feet. The search results from the historical newsletters also did not provide any relevant information. The information provided about urban trails in Colorado Springs and the development of northwestern Colorado in the late 19th century does not mention a trail that fits the given criteria. It is possible that the trail I am looking for is not well-documented or might be a local path that is not listed in the general trail databases. I will now try to reach out to local libraries or historical societies in Denver to see if they have more detailed information or records about such a trail.</think>
<tool_call>


<tool_response>
[Search results summarized for readability: Returned generic Denver Public Library, Denver Historical
Society, and Denver Open Data Portal pages. None supplied a qualifying trail record.]
</tool_response>
<tool_response>
[Search results summarized for readability: Returned a current-events portal, the West Point Society of
Denver, and an activities calendar. None contained a trail matching the joint constraints.]
</tool_response>
<think>After reaching out to local libraries and historical societies in Denver, I still haven't found any specific information about a trail with a 19th-century structure, a length of about 0.50 - 1 mile, a width of 1 - 3 feet, and an elevation gain of 150 - 400 feet. The information provided about the Denver Public Library and Denver Historical Society does not mention any such trail. The calendar of activities for the West Point Society of Denver also does not provide relevant information. I will now try to search for more detailed historical records or local community knowledge to find the trail I am looking for.</think>
<tool_call>
{"name": "search", "arguments": {"query": ["historical records of trails in Denver, Colorado with 19th-century structure"]}}
</tool_call>
<tool_response>
[Search results summarized for readability: Returned Urban Trails, the Old Spanish Trail, and an Early
American Roads and Trails overview. None identified a Denver trail satisfying all attributes.]
</tool_response>
<think>Despite searching for historical records of trails in Denver, Colorado with 19th-century structures, the search results have not provided a specific trail that matches the given criteria. The information provided about urban trails in Colorado Springs and the historical context of early American roads and trails does not mention a trail near Denver that fits the description. The Old Spanish Trail and other historical trails are not located in Denver and do not seem to have the specific characteristics of the trail I am looking for. I will now try to search for more detailed historical records or local community knowledge to find the trail I am looking for. I will also consider checking if there are any local historical societies or libraries that have specific information about trails in the Denver area.</think>
<tool_call>
{"name": "search", "arguments": {"query": ["specific historical trails in Denver, Colorado with 19th-century structure"]}}
</tool_call>
<tool_response>
[Search results summarized for readability: Returned Urban Trails, the Old Spanish Trail, Early American
Roads and Trails, and broad Western historical-sites material. No result matched the joint constraints.]
</tool_response>
<think>After searching for specific historical trails in Denver, Colorado with 19th-century structures, the search results have not provided a specific trail that matches the given criteria. The information provided about urban trails in Colorado Springs and the historical context of early American roads and trails does not mention a trail near Denver that fits the description. The Old Spanish Trail and other historical trails are not located in Denver and do not seem to have the specific characteristics of the trail I am looking for. Given the lack of relevant information, I will now try to search for more specific local sources or historical records that might mention such a trail. I will also consider checking if there are any local historical societies or libraries that have specific information about trails in the Denver area.</think>
<tool_call>
{"name": "search", "arguments": {"query": ["local historical records of trails in Denver, Colorado with 19th-century structure"]}}
</tool_call>
<tool_response>
[Search results summarized for readability: Again returned Urban Trails, the Old Spanish Trail, and an
Early American Roads and Trails overview. No result matched the full trail description.]
</tool_response>
\end{tcblisting}

\begin{tcolorbox}[
  cafeContentBox,
  title={Stage 2c analysis: rephrasing the same search yields little new evidence},
  colback=CAFEAmber!3,
  colframe=CAFEAmber!65!black,
  colbacktitle=CAFEAmber!10
]
\footnotesize
The agent alternates among historical societies, archives, libraries, and
forums, but every query retains the Denver constraint. The results therefore
repeat the same non-matching material, while the airport distances are never
checked again. Malformed or empty tool calls add no evidence and do not alter
the search direction.
\end{tcolorbox}

\begin{tcblisting}{
  enhanced,
  breakable,
  listing only,
  listing engine=listings,
  listing options={style=cafeCasePrompt},
  title={3a. Community sources, still centered on Denver},
  coltitle=CAFEInk,
  fonttitle=\footnotesize\bfseries,
  colback=CAFERed!3,
  colframe=CAFERed!65!black,
  colbacktitle=CAFERed!9,
  boxrule=0.55pt,
  arc=0.8mm,
  left=1.4mm,
  right=1.4mm,
  top=1.1mm,
  bottom=1.1mm
}
<think>
Repeated Denver-specific searches have not produced a matching trail. Since general trail pages and historical records have failed, I will broaden the source type to local community knowledge or forums while retaining the current Denver hypothesis.
</think>
<tool_call>
{"name": "search", "arguments": {"query": ["local community knowledge or forums about trails with 19th-century structure in Denver, Colorado"]}}
</tool_call>
<tool_response>
[Search results summarized for readability: Returned Urban Trails, the SITES Community Portal, and a BIPOC
Outdoor Resource Guide. The community-oriented pages did not identify a trail matching all constraints.]
</tool_response>
<think>After searching for local community knowledge or forums about trails with 19th-century structures in Denver, Colorado, the search results have not provided a specific trail that matches the given criteria. The information provided about urban trails in Colorado Springs and the BIPOC Outdoor Resource Guide does not mention a trail near Denver that fits the description. I will continue to search for more specific local sources or historical records that might mention such a trail. I will also consider reaching out to local hiking or outdoor enthusiast groups to see if they have any information about trails with 19th-century structures in the Denver area.</think>
<tool_call>
{"name": "search", "arguments": {"query": ["local hiking or outdoor enthusiast groups in Denver, Colorado for trails with 19th-century structure"]}}
</tool_call>
<tool_response>
[Search results summarized for readability: Returned the BIPOC Outdoor Resource Guide, Urban Trails, and
BLM history of northwestern Colorado. No qualifying Denver-area trail was identified.]
</tool_response>
<think>The community and outdoor-resource results still do not identify a matching trail. I will search directly for local hiking or outdoor enthusiast groups while retaining the Denver-centered hypothesis.</think>
<tool_call>
{"name": "search", "arguments": {"query": ["contact local hiking or outdoor enthusiast groups in Denver, Colorado for trails with 19th-century structure"]}}
</tool_call>
<tool_response>
[Search results summarized for readability: Returned the BIPOC Outdoor Resource Guide, Urban Trails, and
Trail Anatomy 101. These were general resources and did not identify a trail satisfying all attributes.]
</tool_response>
<think>After contacting local hiking or outdoor enthusiast groups in Denver, Colorado, the search results have not provided a specific trail that matches the given criteria. The information provided about the BIPOC Outdoor Resource Guide and urban trails in Colorado Springs does not mention a trail near Denver that fits the description. The Trail Anatomy 101 information does not provide any relevant details about trails with 19th-century structures. Given the lack of relevant information, I will continue to search for more specific local sources or historical records that might mention such a trail. I will also consider reaching out to local historical societies or libraries in Denver to see if they have any information about trails with 19th-century structures in the area.</think>
\end{tcblisting}

\begin{tcolorbox}[
  cafeContentBox,
  title={Stage 3a analysis: broader sources do not correct the location},
  colback=CAFERed!3,
  colframe=CAFERed!65!black,
  colbacktitle=CAFERed!9
]
\footnotesize
The agent broadens the search to forums, community portals, and local outdoor
groups but continues to specify Denver in every query. These additional
searches increase the number of tool calls without recovering the Grand
Junction clue or combining all trail attributes in one query.
\end{tcolorbox}

\begin{tcblisting}{
  enhanced,
  breakable,
  listing only,
  listing engine=listings,
  listing options={style=cafeCasePrompt},
  title={3b. Further archive and community searches},
  coltitle=CAFEInk,
  fonttitle=\footnotesize\bfseries,
  colback=CAFERed!3,
  colframe=CAFERed!65!black,
  colbacktitle=CAFERed!9,
  boxrule=0.55pt,
  arc=0.8mm,
  left=1.4mm,
  right=1.4mm,
  top=1.1mm,
  bottom=1.1mm
}
<tool_call>
{"name": "search", "arguments": {"query": ["local historical societies or libraries in Denver, Colorado for trails with 19th-century structure"]}}
</tool_call>
<tool_response>
[Search results summarized for readability: Returned Public Library Book Bikes, Urban Trails, and BLM
history of northwestern Colorado. None identified a trail matching the joint constraints.]
</tool_response>
<think>After searching for local historical societies or libraries in Denver, the search results have not provided a specific trail that matches the given criteria. The information provided about public library book bikes and urban trails in Colorado Springs does not mention a trail near Denver that fits the description. The BLM Cultural Resource Series provides some historical context for the development of northwestern Colorado in the late 19th century, but it does not mention any specific trails. Given the lack of relevant information, I will continue to search for more detailed historical records or local community knowledge to find the trail I am looking for. I will also consider reaching out to local historical societies or libraries directly to inquire about trails with 19th-century structures in the Denver area.</think>
<tool_call>
{"name": "search", "arguments": {"query": ["contact local historical societies or libraries in Denver, Colorado for trails with 19th-century structure"]}}
</tool_call>
<tool_response>
[Search results summarized for readability: Returned Urban Trails, a historical-newsletter page, and BLM
history of northwestern Colorado. No qualifying Denver-area trail was identified.]
</tool_response>
<think>After contacting local historical societies or libraries in Denver, I have not received any specific information about a trail with a 19th-century structure, a length of about 0.50 - 1 mile, a width of 1 - 3 feet, and an elevation gain of 150 - 400 feet. The information provided about urban trails in Colorado Springs and the historical context of the development of northwestern Colorado in the late 19th century does not mention a trail near Denver that fits the description. The search results from the Denver Public Library and Denver Historical Society also do not provide any relevant information. Given the lack of relevant information, I will re-evaluate the plan and consider checking if there are any local community knowledge or forums that might have information about such a trail. I will also consider searching for more detailed historical records or local community knowledge to find the trail I am looking for.</think>
<tool_call>
{"name": "search", "arguments": {"query": ["local community knowledge or forums about trails with 19th-century structure in Denver, Colorado"]}}
</tool_call>
<tool_response>
[Search results summarized for readability: Returned Urban Trails, the SITES Community Portal, and a BIPOC
Outdoor Resource Guide. None supplied a trail matching all requested attributes.]
</tool_response>
<think>After searching for local community knowledge or forums about trails with 19th-century structures in Denver, Colorado, the search results have not provided a specific trail that matches the given criteria. The information provided about urban trails in Colorado Springs and the BIPOC Outdoor Resource Guide does not mention a trail near Denver that fits the description. The BLM Cultural Resource Series provides some historical context for the development of northwestern Colorado in the late 19th century, but it does not mention any specific trails. Given the lack of relevant information, I will re-evaluate the plan and consider checking if there are any local community knowledge or forums that might have information about such a trail. I will also consider searching for more detailed historical records or local community knowledge to find the trail I am looking for.</think>
<tool_call>
{"name": "search", "arguments": {"query": ["local community knowledge or forums about trails with 19th-century structure in Denver, Colorado"]}}
</tool_call>
<tool_response>
[Search results summarized for readability: Repeated Urban Trails, the SITES Community Portal, and the
BIPOC Outdoor Resource Guide. No result matched the joint trail constraints.]
</tool_response>
<think>After searching for local community knowledge or forums about trails with 19th-century structures in Denver, Colorado, the search results have still not provided a specific trail that matches the given criteria. The information provided about urban trails in Colorado Springs and the BIPOC Outdoor Resource Guide does not mention a trail near Denver that fits the description. The BLM Cultural Resource Series provides some historical context for the development of northwestern Colorado in the late 19th century, but it does not mention any specific trails. Given the lack of relevant information, I will continue to search for more detailed historical records or local community knowledge to find the trail I am looking for. I will also consider reaching out to local historical societies or libraries directly to inquire about trails with 19th-century structures in the Denver area.</think>
<plan>
1. Contact local historical societies or libraries in Denver, Colorado directly to inquire about trails with 19th-century structures.
2. Search for local community knowledge or forums that might have information about such a trail.
3. Check if there are any historical markers or plaques in the Denver area that might indicate the presence of a trail with a 19th-century structure.
</plan>

<think>
After searching for local community knowledge or forums about trails with 19th-century structures in Denver, Colorado, the search results have not provided a specific trail that matches the given criteria. I will now contact local historical societies or libraries in Denver directly to inquire about trails with 19th-century structures. Additionally, I will search for local community knowledge or forums that might have information about such a trail and check if there are any historical markers or plaques in the Denver area that might indicate the presence of a trail with a 19th-century structure.
</think>
<plan>
1. Contact local historical societies or libraries in Denver, Colorado directly to inquire about trails with 19th-century structures.
2. Search for local community knowledge or forums that might have information about such a trail.
3. Check if there are any historical markers or plaques in the Denver area that might indicate the presence of a trail with a 19th-century structure.
</plan>

<think>
I have contacted local historical societies or libraries in Denver, Colorado, and searched for local community knowledge or forums, but the information provided does not mention a specific trail with a 19th-century structure, a length of about 0.50 - 1 mile, a width of 1 - 3 feet, and an elevation gain of 150 - 400 feet. The search results from the urban trails in Colorado Springs and the BIPOC Outdoor Resource Guide do not mention a trail near Denver that fits the description. The BLM Cultural Resource Series provides some historical context for the development of northwestern Colorado in the late 19th century, but it does not mention any specific trails. I will now check if there are any historical markers or plaques in the Denver area that might indicate the presence of a trail with a 19th-century structure.
</think>
\end{tcblisting}

\begin{tcolorbox}[
  cafeContentBox,
  title={Stage 3b analysis: procedural changes do not update the search},
  colback=CAFERed!3,
  colframe=CAFERed!65!black,
  colbacktitle=CAFERed!9
]
\footnotesize
The agent repeats searches across archives, libraries, and community sources
and again receives no matching result. Yet it does not treat these failures as
evidence against Denver or return to the airport distances. The wording and
source type change, but the underlying location assumption remains fixed.
\end{tcolorbox}

\begin{tcblisting}{
  enhanced,
  breakable,
  listing only,
  listing engine=listings,
  listing options={style=cafeCasePrompt},
  title={3c. Hiking-group and historical-marker searches},
  coltitle=CAFEInk,
  fonttitle=\footnotesize\bfseries,
  colback=CAFERed!3,
  colframe=CAFERed!65!black,
  colbacktitle=CAFERed!9,
  boxrule=0.55pt,
  arc=0.8mm,
  left=1.4mm,
  right=1.4mm,
  top=1.1mm,
  bottom=1.1mm
}
<tool_call>
{"name": "search", "arguments": {"query": ["historical markers or plaques in Denver, Colorado for trails with 19th-century structure"]}}
</tool_call>
<tool_response>
[Search results summarized for readability: Returned Urban Trails, broad Western historical sites, and a
History of Pueblo page. None identified a Denver trail matching the full description.]
</tool_response>
<think>After checking for historical markers or plaques in Denver, Colorado, the search results have not provided a specific trail that matches the given criteria. The information provided about urban trails in Colorado Springs and the historical context of the development of northwestern Colorado in the late 19th century does not mention a trail near Denver that fits the description. The Historical Sites That Tell The Stories Of The West and the History of Pueblo do not provide any relevant information about a trail with a 19th-century structure. Given the lack of relevant information, I will continue to search for more detailed historical records or local community knowledge to find the trail I am looking for. I will also consider reaching out to local historical societies or libraries directly to inquire about trails with 19th-century structures in the Denver area.</think>
<tool_call>
{"name": "search", "arguments": {"query": ["local historical records or community knowledge about trails with 19th-century structure in Denver, Colorado"]}}
</tool_call>
<tool_response>
[Search results summarized for readability: Returned Urban Trails, a Columbus/La Raza Park history page,
and BLM history of northwestern Colorado. No result matched the joint trail constraints.]
</tool_response>
<think>After searching for local historical records or community knowledge about trails with 19th-century structures in Denver, Colorado, the search results have not provided a specific trail that matches the given criteria. The information provided about urban trails in Colorado Springs and the historical context of the development of northwestern Colorado in the late 19th century does not mention a trail near Denver that fits the description. The Historical Sites That Tell The Stories Of The West and the History of Pueblo do not provide any relevant information about a trail with a 19th-century structure. The information about Columbus Park (La Raza Park) does not mention any trails with the required characteristics. Given the lack of relevant information, I will continue to search for more detailed historical records or local community knowledge to find the trail I am looking for. I will also consider reaching out to local historical societies or libraries directly to inquire about trails with 19th-century structures in the Denver area.</think>
<tool_call>
{"name": "search", "arguments": {"query": ["local historical societies or libraries in Denver, Colorado for trails with 19th-century structure"]}}
</tool_call>
<tool_response>
[Search results summarized for readability: Returned Public Library Book Bikes, Urban Trails, and BLM
history of northwestern Colorado. None contained a qualifying trail record.]
</tool_response>
<think>After searching for local historical societies or libraries in Denver, the search results have not provided a specific trail that matches the given criteria. The information provided about public library book bikes and urban trails in Colorado Springs does not mention a trail near Denver that fits the description. The BLM Cultural Resource Series provides some historical context for the development of northwestern Colorado in the late 19th century, but it does not mention any specific trails. Given the lack of relevant information, I will continue to search for more detailed historical records or local community knowledge to find the trail I am looking for. I will also consider reaching out to local historical societies or libraries directly to inquire about trails with 19th-century structures in the Denver area.</think>
\end{tcblisting}

\begin{tcolorbox}[
  cafeContentBox,
  title={Stage 3c analysis: the search widens around the same location},
  colback=CAFERed!3,
  colframe=CAFERed!65!black,
  colbacktitle=CAFERed!9
]
\footnotesize
The agent next consults hiking groups, community sources, and historical-marker
pages. These searches remain Denver-specific and produce no evidence for a
trail that meets the stated constraints. Changing source types therefore
introduces no alternative location.
\end{tcolorbox}

\begin{tcblisting}{
  enhanced,
  breakable,
  listing only,
  listing engine=listings,
  listing options={style=cafeCasePrompt},
  title={3d. Final Denver-specific searches},
  coltitle=CAFEInk,
  fonttitle=\footnotesize\bfseries,
  colback=CAFERed!3,
  colframe=CAFERed!65!black,
  colbacktitle=CAFERed!9,
  boxrule=0.55pt,
  arc=0.8mm,
  left=1.4mm,
  right=1.4mm,
  top=1.1mm,
  bottom=1.1mm
}
<tool_call>
{"name": "search", "arguments": {"query": ["contact local historical societies or libraries in Denver, Colorado for trails with 19th-century structure"]}}
</tool_call>
<tool_response>
[Search results summarized for readability: Returned Urban Trails, a historical-newsletter page, and BLM
history of northwestern Colorado. None identified a trail matching all requested attributes.]
</tool_response>
<think>After contacting local historical societies or libraries in Denver, I have not received any specific information about a trail with a 19th-century structure, a length of about 0.50 - 1 mile, a width of 1 - 3 feet, and an elevation gain of 150 - 400 feet. The information provided about public library book bikes and urban trails in Colorado Springs does not mention a trail near Denver that fits the description. The BLM Cultural Resource Series provides some historical context for the development of northwestern Colorado in the late 19th century, but it does not mention any specific trails. Given the lack of relevant information, I will continue to search for more detailed historical records or local community knowledge to find the trail I am looking for. I will also consider reaching out to local hiking or outdoor enthusiast groups to see if they have any information about trails with 19th-century structures in the Denver area.</think>
<tool_call>
{"name": "search", "arguments": {"query": ["local hiking or outdoor enthusiast groups in Denver, Colorado for trails with 19th-century structure"]}}
</tool_call>
<tool_response>
[Search results summarized for readability: Returned the BIPOC Outdoor Resource Guide, Urban Trails, and
BLM history of northwestern Colorado. No qualifying Denver-area trail was identified.]
</tool_response>
<think>After contacting local hiking or outdoor enthusiast groups in Denver, the search results have not provided a specific trail that matches the given criteria. The information provided about BIPOC outdoor resources and urban trails in Colorado Springs does not mention a trail near Denver that fits the description. The BLM Cultural Resource Series provides some historical context for the development of northwestern Colorado in the late 19th century, but it does not mention any specific trails. Given the lack of relevant information, I will continue to search for more detailed historical records or local community knowledge to find the trail I am looking for. I will also consider reaching out to local historical societies or libraries directly to inquire about trails with 19th-century structures in the Denver area.</think>
<tool_call>
{"name": "search", "arguments": {"query": ["contact local historical societies or libraries in Denver, Colorado for trails with 19th-century structure"]}}
</tool_call>
<tool_response>
[Search results summarized for readability: Again returned Urban Trails, a historical-newsletter page, and
BLM history of northwestern Colorado. No result matched the joint trail constraints.]
</tool_response>
\end{tcblisting}

\begin{tcolorbox}[
  cafeContentBox,
  title={Stage 3d analysis: the search ends without revisiting its initial assumption},
  colback=CAFERed!3,
  colframe=CAFERed!65!black,
  colbacktitle=CAFERed!9
]
\footnotesize
Later searches return the same categories of irrelevant pages. The agent
repeatedly notes that no candidate satisfies the constraints but never revisits
Grand Junction or re-evaluates its Denver assumption. It thus reaches the end
of the search with the original location error intact.
\end{tcolorbox}

\begin{tcblisting}{
  enhanced,
  breakable,
  listing only,
  listing engine=listings,
  listing options={style=cafeCasePrompt},
  title={4. Final answer without supporting evidence},
  coltitle=CAFEInk,
  fonttitle=\footnotesize\bfseries,
  colback=CAFERed!3,
  colframe=CAFERed!65!black,
  colbacktitle=CAFERed!9,
  boxrule=0.55pt,
  arc=0.8mm,
  left=1.4mm,
  right=1.4mm,
  top=1.1mm,
  bottom=1.1mm
}
<think>
After exhausting these Denver-centered sources without finding a match, I will provide the most plausible candidate under my current location assumption.
</think>
<answer>
Old Elitch's Park Trail
</answer>
\end{tcblisting}

\begin{tcolorbox}[
  cafeContentBox,
  title={Stage 4 analysis: the final answer lacks retrieved support},
  colback=CAFERed!3,
  colframe=CAFERed!65!black,
  colbacktitle=CAFERed!9
]
\footnotesize
After the Denver-centered searches fail, the agent outputs
\emph{Old Elitch's Park Trail}. No preceding result mentions this trail or
provides evidence that it matches the requested attributes. The final answer
is therefore unsupported and incorrect (EM \(=0\)).
\end{tcolorbox}

\begin{tcolorbox}[
  enhanced,
  colback=CAFEBlue!4,
  colframe=CAFEBlue!65!black,
  boxrule=0.65pt,
  arc=0.8mm,
  left=1.4mm,
  right=1.4mm,
  top=1.1mm,
  bottom=1.1mm,
  title={Direct comparison: search cost and recovery},
  colbacktitle=CAFEBlue!10,
  coltitle=CAFEInk,
  fonttitle=\footnotesize\bfseries
]
\footnotesize
\textbf{BASE.} Counting the explicit JSON \texttt{search} invocations shown
here, BASE makes \(25\) calls and issues \(26\) query strings because its first
call batches Denver and Chicago. Eight strings are verbatim repeats, and most
others only change the source type. All \(25\) calls retain Denver; none
rechecks the airport distances or combines all four trail attributes. Four
malformed or empty tool-call turns add no evidence. BASE ultimately returns
\emph{Old Elitch's Park Trail} without support (EM \(=0\)).

\textbf{CAFE.} CAFE makes two search calls: an initial airport-distance search
and one post-feedback query combining Colorado geography with all four trail
attributes. The latter retrieves \emph{Old Kiln Trail} near Grand Junction
(EM \(=1\)). BASE therefore uses \(12.5\times\) as many search calls
(\(25\) vs.\ \(2\)) yet fails. In this case, recovery follows from
reformulating the query after feedback rather than extending the same loop.
\end{tcolorbox}

\end{document}